%% file: main.tex
\input{_preface}

\usepackage{amsmath, amsthm, amssymb}
\usepackage{booktabs}
\usepackage{colortbl}
\usepackage{csquotes}
\usepackage{inconsolata}
\usepackage{multirow}
\usepackage{listings}
\usepackage[dvipsnames]{xcolor}
\usepackage{comment}
\usepackage{threeparttable}
\usepackage{hyperref}

\definecolor{backgroundColour}{HTML}{FAFAFA}
\definecolor{keywordclr}{HTML}{3F51B5}
\definecolor{commentclr}{HTML}{757575}
\definecolor{stringsclr}{HTML}{279049}
\definecolor{fnctionclr}{rgb}{0.467, 0, 0.533}
\definecolor{builtinclr}{rgb}{0.35, 0, 0.533}
\definecolor{symbolsclr}{rgb}{0.5, 0.25, 0.25}   
\definecolor{numbersclr}{rgb}{0.8, 0.2, 0}
\definecolor{bckgrndclr}{rgb}{0.91, 0.95, 0.95}
\lstdefinestyle{PythonStyle}{
    language=Python,
    backgroundcolor=\color{backgroundColour},
    keywordstyle=\color{keywordclr}\bfseries,
    stringstyle=\color{stringsclr},
    commentstyle=\color{commentclr}\itshape,
    upquote=true,
    basicstyle=\ttfamily\linespread{0.9}\scriptsize,
    breakatwhitespace=false,
    breaklines=true,
    captionpos=b,
    keepspaces=true,
    numbers=left,
    numbersep=5pt,
    numberstyle=\color{commentclr}\ttfamily\tiny,
    showspaces=false,
    showstringspaces=false,
    showtabs=false,
    tabsize=2,
    xleftmargin=1.25em,
    frame=single,
    framexleftmargin=1.25em,
    morekeywords={assert,with,as}
}
\newtheorem{theorem}{Theorem}

\input{custom_commands}
\input{math}

\begin{document}

\maketitle

\begin{abstract}

Lens methods interpret large language models (LLMs) by mapping intermediate activations to the output vocabulary, revealing how next-token predictions develop through the network. Trained lenses remain expensive: affine-translator parameters grow quadratically with model width, while exact, full-vocabulary Kullback--Leibler (KL) training dominates memory. Consequently, prior trained lenses have been applied to models of at most 20B parameters and remain tied to particular component types. We present OmniLens, which applies a single lens family to any model-width activation, whether residual stream, attention, or MLP, and combines two independent scaling techniques. First, low-rank translators make per-lens parameter growth linear in model width and reduce trainable parameters by up to 98.4\%. Second, Subset-KL materializes only selected vocabulary logits: its Top-$k$ mode cuts peak training memory by up to 70\%, while its importance-sampled variant retains unbiased stochastic gradients for the full KL. These savings enable a dense ensemble of 482 lenses for LLaMA-3.3-70B, providing 6$\times$ the coverage of a residual-stream design at the same depth. Model-wide coverage then reveals what single-component lenses cannot: the components where a behavior is most visible need not be those where intervention is most effective, and the most effective interventions lie outside the attention heads examined by prior lens studies. Across three case studies (prompt-injection detection, multi-hop memory injection, and toxicity localization), OmniLens reproduces key published results at substantially lower cost.
\end{abstract}

\input{Sections/1_Introduction}

\input{Sections/2_background}

\input{Sections/3_lora}

\input{Sections/4_subset_kl}

\input{Sections/5_interp_case_studies}

\input{Sections/6_conclusion}


\section*{Acknowledgments}
This research used resources of the Argonne Leadership Computing Facility, which is a U.S. Department of Energy Office of Science User Facility operated under contract DE-AC02-06CH11357.
This work was partially supported by the AuroraGPT project.
MS was supported by the U.S. Department of Energy, Office of Science,
Office of Advanced Scientific Computing Research, Department of Energy Computational Science
Graduate Fellowship under Award Number DE-SC0023112. 

\bibliography{refs}

\clearpage


\appendix

\input{Appendix/lens_taxonomy}
\input{Appendix/experimental_setup}
\input{Appendix/Implementations}
\input{Appendix/AddAblations}
\input{Appendix/kl_estimators}
\input{Appendix/theory}
\input{Appendix/IndexedLogits}
\input{Appendix/Memory_Model}
\input{Appendix/case_studies_appendix}

\end{document}

%% file: _preface.tex
\documentclass[letterpaper]{article} 
\usepackage[preprint]{aaai2027}  
\usepackage{times}  
\usepackage{helvet}  
\usepackage{courier}  
\usepackage[hyphens]{url}  
\usepackage{graphicx} 
\usepackage{natbib}  
\usepackage{caption} 
\usepackage{algorithm}
\usepackage{algorithmic}

\title{Interpreting Language Model Hidden States at Scale}
\author{
    Jordan Pettyjohn\textsuperscript{\rm 1},
    Mansi Sakarvadia\textsuperscript{\rm 1},
    Nathaniel Hudson\textsuperscript{\rm 2,3},
    \\
    Daniel McKenzie\textsuperscript{\rm 4}, 
    Kyle Chard\textsuperscript{\rm 1,3}, 
    Ian Foster\textsuperscript{\rm 1,3}
}
\affiliations{
    \textsuperscript{\rm 1}University of Chicago\\
    \textsuperscript{\rm 2}Illinois Institute of Technology\\
    \textsuperscript{\rm 3}Argonne National Laboratory\\
    \textsuperscript{\rm 4}Colorado School of Mines\\

}

%% file: custom_commands.tex
\colorlet{shaderow1}{blue!10}
\colorlet{shaderow2}{green!15}

\usepackage{pifont}
\newcommand{\cmark}{\textcolor{ForestGreen}{\ding{51}}}
\newcommand{\xmark}{\textcolor{Red}{\ding{55}}}

%% file: math.tex
\DeclareMathOperator{\softmax}{softmax}

\newcommand{\klapprox}{\ensuremath{\mathrm{Top\text{-}}k\mathrm{+IS}}}

%% file: Sections/1_Introduction.tex

\begin{figure}[t]
    \centering
    \includegraphics[width=0.9\linewidth]{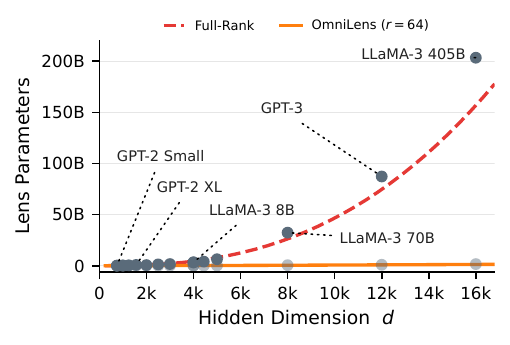}
    \caption{Full-rank lens parameter grows as $\mathcal{O}$(Layer$\times d^2$) exceeding 200B for LLaMA-3-405B.}
    \label{fig:lens_param_explosion}
\end{figure}

\begin{figure*}[t]
    \centering
    \includegraphics[width=0.9\linewidth]{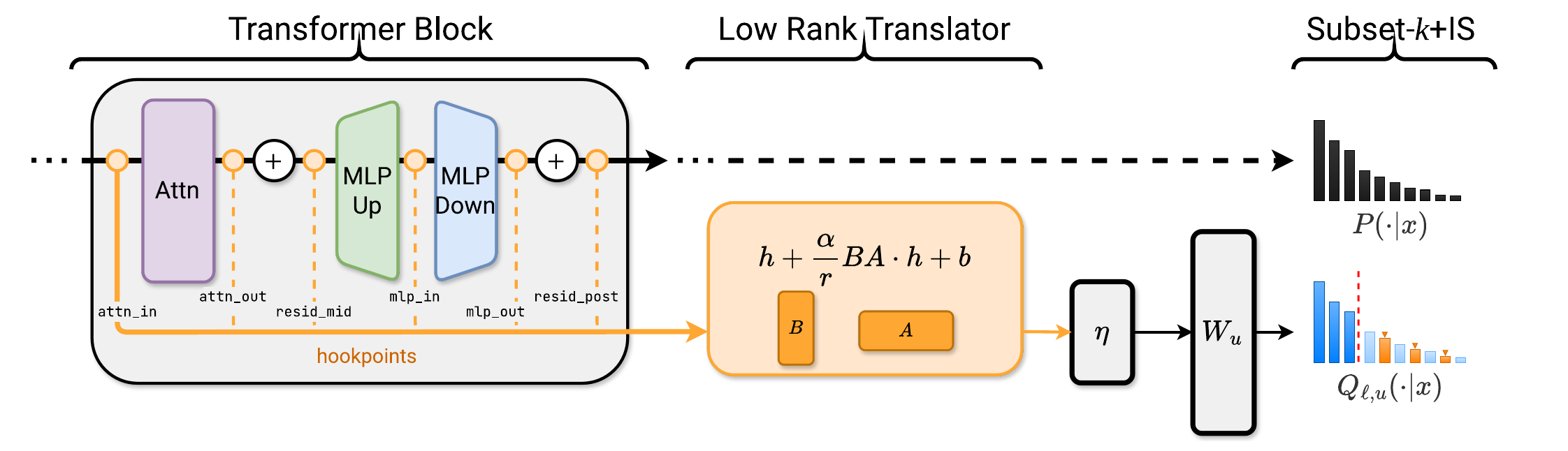}
    \caption{OmniLens at a glance. Hooks are placed at arbitrary user-defined points in the model; each lens applies the low-rank translator of Eq.~\eqref{eq:lora_lens} and is trained against the model's final distribution under the Subset-KL objectives of Section~\ref{sec:subset_kl}.}
    \label{fig:method}
\end{figure*}

\section{Introduction}
\label{sec:introduction}

Understanding how a language model forms its predictions, and where behaviors emerge within its computation, is central to interpreting and controlling it~\citep{orgadinterpretability, shapira2026agentschaos}. A \emph{lens} provides a direct view by decoding intermediate activations into vocabulary distributions, revealing how next-token predictions evolve through the network and where behaviors can be detected or influenced. Lenses can decode many kinds of intermediate activations; we refer to the model component providing input to a lens as its {\em hookpoint}. For example, the \textit{Tuned Lens} reads the residual stream~\citep{belrose2023elicitinglatentpredictionstransformers}, whereas the \textit{Attention Lens} reads individual attention heads~\citep{sakarvadia2023attentionlenstoolmechanistically}.

Unlike concept-specific classifier probes, which require labeled examples for each target~\citep{alain2018understandingintermediatelayersusing,hewitt-manning-2019-structural,belinkov-etal-2017-neural}, lenses can be trained in a self-supervised manner: the model's own final distribution supplies the target, so no curated labels are needed. A single trained lens can therefore support many downstream analyses. This capability is most valuable when applied densely across a model, that is, with multiple hookpoints at every layer. However, training such an ensemble of lenses has remained prohibitively expensive.

Specifically, trained lenses face three challenges when applied at scale.
\emph{Parameters.} A full-rank translator costs $\mathcal{O}(d^2)$ per hookpoint, where $d$ is the model's hidden dimension, so dense coverage of a large model can rival the model itself (Fig.~\ref{fig:lens_param_explosion}). \emph{Memory.} Lens training uses the KL divergence between the model's own final distribution and the lens' distribution as the training objective, which materializes two vocabulary-sized distributions per token. This quickly exhausts available VRAM. \emph{Specialization.} Existing lenses each read one hookpoint type, and each new hookpoint requires a new lens family. 
Together, these challenges limit the application of trained lenses to small models with sparse coverage. 

We address all three challenges with \emph{OmniLens}, {\bf unlocking dense lens coverage for (near-)frontier scale models}. OmniLens is a hookpoint-agnostic framework that applies one lens family to residual, attention, and MLP activations. OmniLens makes dense coverage tractable through two parallel techniques. First, low-rank translators reduce the per-hookpoint parameter count from $\mathcal{O}(d^2)$ to $\mathcal{O}(rd)$ where $r$ is the target rank. Second, a flexible family of approximations to the KL divergence---which we call Subset-KL objectives---materialize only a small subset of the lens' distribution per token, greatly reducing memory requirements. 


We combine these contributions in an open-source framework\footnote{Code: OmniLens (training framework) \url{https://github.com/pettyjohnjn/OmniLens}; Hookbox (hookpoint instrumentation) \url{https://github.com/pettyjohnjn/hookbox}; IndexedLogits (fused CUDA kernel) \url{https://github.com/pettyjohnjn/indexed_logits}; SubsetKL (objectives and estimator) \url{https://github.com/pettyjohnjn/subset-kl}} and reproduce key metrics from prior work at substantially lower cost~\citep{belrose2023elicitinglatentpredictionstransformers,sakarvadia2024memoryinjectionscorrectingmultihop,pettyjohn2025mind}. OmniLens covers six times as many lenses as the tuned lens baseline while costing less, with 90.5\% fewer trainable parameters on LLaMA-3-70B and up to 70\% lower peak training memory on GPT-2. Across three interpretability case studies it matches the application efficacy of existing lens frameworks, and model-wide coverage additionally reveals what single-component lenses cannot: the hookpoints where a behavior is most visible are not necessarily those where intervention is most effective. To test the limits of the approach, we run eight optimization steps on LLaMA-3.1-405B, to our knowledge the first measured demonstration of trained-lens optimization at this scale on existing hardware, establishing that the training path is executable at frontier scale.

%% file: Sections/2_background.tex

\section{Background and Related Work}
\label{sec:background}


We study pretrained autoregressive Transformer language models~\citep{vaswani2017attention}. Let $x$ be a length-$T$ token sequence from vocabulary $V$, and let $P(\cdot|x)$ denote the model's next-token distribution at a given position. The model has hidden dimension $d$ and $L$ layers, with hookpoints exposing residual-stream states, attention and MLP outputs, or individual head outputs, depending on the desired resolution. We write $H_{\ell,u}\in\mathbb{R}^{T\times d_u}$ for the output of component $u$ at layer $\ell$ and $h_{\ell,u}\in\mathbb{R}^{d_u}$ for a single position's activation; lenses apply positionwise, and we suppress the position index. We organize prior work by the three choices that determine a lens method's scalability: where it reads, how it translates the activation, and how its training objective is computed.

\paragraph{Lens formulation and component specialization.}\label{sec:LensBasedInterp}
A lens is an auxiliary decoder mapping an intermediate activation to a distribution over the model's vocabulary. We restrict attention to linear lenses that reuse the model's frozen final normalization $\eta$ (LayerNorm or RMSNorm, applied exactly as in the model's readout) and unembedding $W_U\in\mathbb{R}^{|V|\times d}$:
\begin{equation}
    Q_{\ell, u} ( \cdot | x) = \softmax
    \left(
        W_U\,
        \eta\!\left(\mathcal{L}_{\ell, u}\, h_{\ell, u} + b_{\ell, u}\right)
    \right)
    \label{eq:lens_def}
\end{equation}
\noindent where $\mathcal{L}_{\ell,u}\in \mathbb{R}^{d\times d_u}$ and $b_{\ell,u}\in\mathbb{R}^{d}$ are learned. The logit lens \citep{Nostalgebraist2020LogitLens} applies the model's frozen readout directly to model-width residual-stream states, corresponding to $\mathcal{L}_{\ell,u}=I$ and $b_{\ell,u}=0$. Because this direct readout can be a poor proxy for the final prediction, subsequent methods learn affine translators so that $Q_{\ell,u}(\cdot|x)$ approximates $P(\cdot|x)$ \citep{belrose2023elicitinglatentpredictionstransformers, din2024jump, pal2023future}. These methods target residual streams, while Attention Lens \citep{sakarvadia2023attentionlenstoolmechanistically} learns decoders for individual attention heads. Each construction commits to a component family, so reading a different component requires another lens design; this coupling between translator and component is the specialization bottleneck OmniLens addresses.
The Backward Lens~\citep{katz2024backward} projects gradients rather than activations into vocabulary space, proving that such projections admit low-rank structure.

\paragraph{Parameter-efficient translators.}\label{sec:lens_lineage}
A dense affine translator has $d_ud+d$ learned parameters, and fitting one per component across all layers quickly grows impractical: at the $(6L{+}2)$ hookpoint density we study, the lens set approaches half the parameter count of the base model (Fig.~\ref{fig:lens_param_explosion}), and finer constructions such as the per-head decoders of Attention Lens can exceed the model of study outright. \textit{Low-Rank Adaptation}~(LoRA) \citep{hu2021loralowrankadaptationlarge}---originally used for parameter-efficient fine-tuning---freezes a model's weights and learns a low-rank update to each, cutting trainable parameters by orders of magnitude. 


Low-rank parameterizations have been combined with lenses, yet existing implementations retain the component specialization of their full-rank predecessors. LoRA Lens \citep{pettyjohn2025mind} factorizes per-head Attention Lens decoders as rank-$r$ updates to the frozen unembedding on models up to 8B, while concurrent work \citep{trimigno2026lowrank} trains low-rank residual-stream lenses on models up to 32B. Each applies low rank within a single component family, and neither supplies one translator architecture spanning residual, attention, and MLP components or addresses the vocabulary-side memory cost that limits dense coverage at scale. A full taxonomy of lenses is in Appendix~\ref{appendix:lens_taxonomy}.

\paragraph{Memory-efficient distillation.}\label{sec:Approx_KL}
Most lens frameworks measure the discrepancy between $P(\cdot | x)$ and $Q_{\ell, u} (\cdot | x)$ using the token-level \textit{Kullback-Leibler}~(KL) divergence,
\begin{equation}
\begin{aligned}
    D_{\mathrm{KL}}( P \| Q_{\ell, u} )
        &= \sum_{v \in V} P(v|x) \log\frac{P(v|x)}{Q_{\ell, u}(v|x)}
    \\
        &= \mathbb{E}_{v \sim P(\cdot|x)}\left[\log\frac{P(v|x)}{Q_{\ell, u}(v|x)}\right],
\end{aligned}
\label{eq:KL_definition}
\end{equation}
where we call $P(\cdot | x)$ the \emph{teacher} and $Q_{\ell, u} (\cdot | x)$ the \emph{student}. Materializing all $T|V|$ student logits for a single input $x$ is costly, and numerous approximation schemes avoid it.

We use KL as a distillation loss \citep{hinton2015distillingknowledgeneuralnetwork,sanh2020distilbertdistilledversionbert}. Because the trainable student is the second argument of $D_{\mathrm{KL}}(\cdot \,\|\, \cdot)$ and the expectation runs over the fixed teacher, drawing tokens from $P$ gives a well-behaved and unbiased Monte Carlo estimator. When tokens are drawn from another proposal $R$, importance sampling weights each sampled contribution by $P(v|x)/R(v|x)$ so that its expectation still recovers the original KL \citep{aminibetter2025}. Deterministic Top-$k$ truncation instead scores only the $k$ most probable teacher tokens, a memory-cheap but biased reduction \citep{shao2024deepseekmath}. Appendix~\ref{appendix:kl_estimators} distinguishes this setting from KL regularization under a trainable sampling distribution~\citep{tang2025few}.

\label{sec:related_methods}
The closest sparse-distillation baseline is \textit{Random Sampling Knowledge Distillation}~(RS-KD)~\citep{anshumann2025sparse}, compared against in Section~\ref{sec:is}; our sampled tail builds on importance sampling for large output spaces~\citep{katharopoulos2019samplescreatedequaldeep,pmlr-v80-blanc18a}. These motivate \emph{Subset-KL} (Section~\ref{sec:subset_kl}): biased Top-k truncation and an exact-head, importance-sampled-tail variant with unbiased gradients.

\paragraph{Training-free and complementary readouts.}
The Jacobian lens \citep{gurnee2026jlens} avoids translator training by decoding average local output sensitivity at a hookpoint. It asks what a state locally \emph{represents} under a first-order perturbation, whereas a trained predictive lens is optimized to recover the model's eventual output distribution; its authors observe that tuned lenses can therefore ``skip ahead'' past intermediate representations. PatchScopes~\citep{ghandeharioun2024patchscopes} is likewise training-free, patching hidden states into explanatory prompts so that the model's own generation serves as the readout. These views are complementary, although each remains a per-hookpoint readout. More broadly, lenses are observational decoders and do not by themselves establish causal mechanisms. They complement causal techniques such as circuit discovery and activation patching~\citep{elhage2021mathematical,wang2022interpretabilitywildcircuitindirect,conmy2023automatedcircuitdiscoverymechanistic,meng2022locating}; observational attention analyses~\citep{clark-etal-2019-bert,voita-etal-2019-analyzing,vig-belinkov-2019-analyzing}; and feature-oriented methods including feature visualization, sparse autoencoders that decompose activations into features in superposition, and stochastic parameter decomposition targeting weights rather than activations~\citep{olahFeatureVis,cammarata2021curve,cunningham2023sparseautoencodershighlyinterpretable,templeton2024scaling,sharkey2022taking,bushnaq2025stochastic}. At frontier scale, circuit tracing~\citep{anthropic2025circuits} combines cross-layer transcoders with attribution graphs to map computational structure in production models. Classifier-based probes~\citep{ettinger-etal-2016-probing,conneau2018cramsinglevectorprobing,ivanitskiy2024linearly,kramar2026building} likewise read internal states but test individual concepts using curated labels, whereas lenses target the complete output distribution without task-specific labels.


\paragraph{Synthesis.} Prior work addresses three bottlenecks separately: learned lenses improve fidelity but specialize to one component family, low-rank lenses reduce translator parameters within those families, and sparse distillation reduces vocabulary-side cost without changing where lenses attach. OmniLens combines all three.

%% file: Sections/3_lora.tex
\section{OmniLens: Low-Rank Parameterization}
\label{sec:omnilens}\label{sec:lora_lens}

OmniLens combines low-rank translators, hookpoint-agnostic attachment, and the Subset-KL objectives of Section~\ref{sec:subset_kl}; this section introduces the first two. The same translator parameterization attaches to model-width residual, attention, and MLP activations, generalizing prior hookpoint-specific and low-rank lens constructions (Section~\ref{sec:lens_lineage}). Our implementation supports distributed training and gradient checkpointing; details are in Appendix~\ref{appendix:implementation}.

\paragraph{Low-rank translator.} For an activation $h_{\ell,u}\in\mathbb{R}^{d}$, OmniLens learns a translator $(\mathcal{L}_{\ell,u},b_{\ell,u})$, where
\begin{equation}
    \mathcal{L}_{\ell,u} = I + \frac{\alpha}{r}B_{\ell,u}A_{\ell,u},
    \label{eq:lora_lens}
\end{equation}
$A_{\ell,u}\in\mathbb{R}^{r\times d}$, and 
    $B_{\ell,u}\in\mathbb{R}^{d\times r}$, with fixed scale $\alpha$, so that $\mathcal{L}_{\ell,u}$ is a rank-at-most-$r$ update to the identity.

Each translator contains $2dr+d$ learned parameters, against $d^2+d$ for a dense affine translator. For $d\gg r$ the ratio of learned parameters per hookpoint is approximately $2r/d$, so the savings compound as coverage grows: at $d$ = 4096 and $r$ = 64, six low-rank translators together contain only about 19\% as many parameters as a single dense translator. Low rank can therefore widen coverage across component types while still reducing the total parameter count.

\paragraph{Hookpoint-agnostic attachment.} Existing trained lenses couple their decoder to a particular component type. OmniLens instead treats hookpoint type as a configuration option: each supported hookpoint supplies a model-width activation, after which the same translator parameterization, the same frozen normalization and unembedding, and the same training objective apply unchanged. We cover six residual, attention, and MLP hookpoint types per layer in our experiments (Fig.~\ref{fig:method}), so component types can be compared without designing and training a separate lens family for each. Every hookpoint studied here has width $d_u = d$.


We initialize $A_{\ell,u}$ with Xavier uniform~\citep{glorot2010understanding} and $B_{\ell,u}$ and $b_{\ell,u}$ to zero, so each translator begins as the identity, with $\alpha/r$ controlling the update scale. The translated state is decoded through the model's frozen final normalization $\eta$ and unembedding $W_U$, exactly as in Eq.~\eqref{eq:lens_def}. A translator is therefore only required to be accurate on the activations the model actually produces, rather than on all of $\mathbb{R}^{d}$, and only up to differences that the frozen readout preserves in the output distribution. A full-rank $d \times d$ update supplies more capacity than these two restrictions demand, motivating the low-rank parameterization evaluated below. Fig.~\ref{fig:lens_param_explosion} presents the resulting parameter scaling across model sizes; Appendix~\ref{app:memory} reports the per-model parameter and memory counts underlying these figures.

\paragraph{Rank ablation.}\label{sec:lora_lens_eval} We validate the low-rank parameterization on GPT-2 Small by sweeping $r\in\{1,4,8,16,32,64,128,256,384\}$ against a full-rank tuned-lens baseline after 1000 optimization steps. We measure KL divergence to the teacher, and top-1 agreement, Pearson $\rho$, and Kendall $\tau$ relative to the full-rank lens, following the evaluation protocol of \citet{belrose2023elicitinglatentpredictionstransformers} (Fig.~\ref{fig:rank_tradeoff}; representative points in Table~\ref{tab:rank_summary}). KL measures fidelity to the teacher distribution, while the agreement metrics test whether the low-rank lens preserves the predictions and token rankings of the dense reference. Appendix~\ref{appendix:experimental_setup} gives the setup; Appendix~\ref{appendix:rank_ablation} reports the complete rank and layer breakdowns.

\begin{figure}[t]
    \centering
    \includegraphics[width=0.95\linewidth]{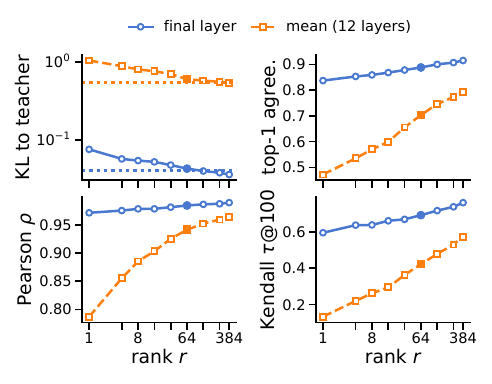}
    \caption{Fidelity-parameter tradeoff on GPT-2 Small at step 1000. Solid lines show the final layer, dashed lines the mean over 12 layers, and dotted horizontal lines represent the full-rank tuned-lens baseline in the KL panel. The filled marker denotes the recommended default, $r=64$; gains diminish at larger ranks. Appendix Table~\ref{tab:lora_fidelity} reports the full sweep.}
    \label{fig:rank_tradeoff}
\end{figure}

\begin{table}[t]
\centering
\caption{Representative points from the rank ablation; the full sweep is
Appendix Table~\ref{tab:lora_fidelity}. Mean metrics average over 12 layers;
KL is to the teacher, agreement metrics are vs.\ the full-rank baseline.
}
\label{tab:rank_summary}
\resizebox{\linewidth}{!}{
\begin{tabular}{lrrrrr}
\toprule
Config. & Params & Mean KL $\downarrow$
        & Top-1 $\uparrow$ & $\rho$ $\uparrow$ & $\tau$ $\uparrow$ \\
\midrule
Full-rank & 100.0\% & 0.543 & 1.000 & 1.000 & 1.000 \\
$r=8$     & 2.1\% & 0.799 & 0.570 & 0.885 & 0.262 \\
$r=32$    & 8.3\% & 0.696 & 0.655 & 0.925 & 0.363 \\
\rowcolor{shaderow1}
$r=64$    & 16.7\% & 0.600 & 0.703 & 0.941 & 0.423 \\
$r=128$   & 33.3\% & 0.574 & 0.746 & 0.952 & 0.479 \\
$r=384$   & 100.0\% & 0.539 & 0.793 & 0.964 & 0.574 \\
\bottomrule
\end{tabular}
}
\end{table}

Fidelity improves rapidly through $r=64$ and only modestly thereafter (Fig.~\ref{fig:rank_tradeoff}). At $r=64$, OmniLens uses 16.7\% as many translator weight parameters as the full-rank baseline while achieving 88.8\% final-layer top-1 agreement (Pearson $\rho=0.984$) and mean $\rho=0.941$; we therefore use $r=64$ as the default. Earlier layers are the hardest to approximate (layer 0: 56.7\% top-1 agreement at $r=64$), and their KL to the teacher remains highest even for the full-rank lens. So, analyses focused on early layers should prefer $r\geq 64$. The parameter-matched run $r{=}384$ slightly outperforms the full-rank baseline on KL, but the difference lies within the baseline's own seed variation (final-layer KL $0.039$--$0.049$ across three seeds; Appendix~\ref{appendix:experimental_setup}); we read this as fidelity preserved, not as low rank being inherently superior. The 8B comparisons in Section~\ref{sec:case_studies} support the same conclusion at scale.


\paragraph{Low rank is a constraint, not a compression.} The deviation from identity learned by the full-rank baseline, $\mathcal{L}_{\ell,u} - I$, is not itself low rank: its best rank-64 approximation retains only $59\%$ of its Frobenius energy on GPT-2 Small, and its numerical rank remains near $d$ throughout training (Fig.~\ref{fig:rank_structure}b). Post-hoc compression accordingly loses fidelity: truncating the full-rank translator to rank 64 raises final-layer KL by $54\%$, while projecting it onto a random 64-dimensional subspace raises KL by more than an order of magnitude. By contrast, a translator trained directly at rank 64 achieves final-layer KL within $7\%$ of the full-rank lens (Fig.~\ref{fig:rank_structure}a). {\em Low-rank training therefore finds a distinct solution of comparable predictive quality rather than merely compressing the dense solution.} Two matrices can differ substantially in Frobenius norm and still induce nearly identical output distributions, because they need only agree on the activations the model produces and only up to differences the frozen readout preserves.

\begin{figure}[t]
    \centering
    \includegraphics[width=0.95\linewidth]{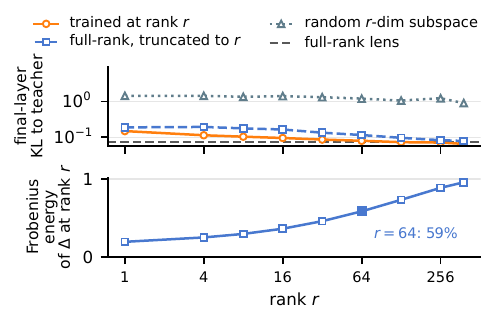}
    \caption{Low rank is a constraint, not a compression (GPT-2 Small, step
    1000, with 131{,}072 held-out Pile tokens).
    \emph{Top:}~Final-layer KL for a translator trained directly at rank $r$, a
    full-rank translator truncated post hoc to rank $r$, and a random
    $r$-dimensional projection of it; the dashed line marks
    the unmodified full-rank lens. 
    \emph{Bottom:}~Frobenius energy of the full-rank deviation captured at rank $r$.
    }
    \label{fig:rank_structure}
\end{figure}

%% file: Sections/4_subset_kl.tex
\section{OmniLens: Subset-KL Training}
\label{sec:subset_kl}

Low-rank translators reduce lens parameters and optimizer state, but they do not narrow the lens output: after translation, every active lens must still score vocabulary items at every token position. Computing the full KL divergence therefore materializes a vocabulary-sized logit tensor for each lens $Q_{\ell,u}$ (Eq.~\eqref{eq:KL_definition}). This activation cost grows with batch size, context length, and vocabulary size, and can dominate memory even when the translator itself is lightweight.

We call objectives that evaluate only selected vocabulary tokens \emph{Subset-KL}, and study two variants with different computational and statistical guarantees. Top-$k$ restricts both distributions to the teacher's most probable tokens and renormalizes within that set: this avoids the full-vocabulary lens projection but changes the objective, and is therefore biased (Section~\ref{sec:topk}). \klapprox\ evaluates the head exactly and importance-samples the remaining vocabulary, retaining the full student normalization and giving unbiased stochastic gradients for the original KL (Section~\ref{sec:is}). Top-$k$ thus prioritizes memory and throughput, whereas \klapprox\ prioritizes fidelity to the full-KL objective. We follow the evaluation protocol of \citet{belrose2023elicitinglatentpredictionstransformers}; Appendix~\ref{appendix:experimental_setup} gives the complete experimental setup.

\subsection{Top-$k$ Truncation}
\label{sec:topk}

Let $\mathcal H$ contain the $k_{\mathrm{head}}$ most probable teacher tokens. Top-$k$ restricts both distributions to $\mathcal H$ and renormalizes them:
\begin{equation}
    D_{\mathrm{Top}\text{-}k}(P\|Q_{\ell,u})
    =
    \sum_{v\in\mathcal H}
    P_{\mathcal H}(v|x)
    \log\frac{P_{\mathcal H}(v|x)}
              {Q_{\mathcal H,\ell,u}(v|x)},
    \label{eq:KL_topk}
\end{equation}
where $P_{\mathcal H}$ and $Q_{\mathcal H,\ell,u}$ denote the restricted, renormalized distributions. Equivalently, Top-$k$ asks the lens to reproduce the teacher's relative preferences among its $k_{\mathrm{head}}$ most probable tokens while ignoring both distributions' mass outside that set. Thus it is not a cheaper implementation of the full KL: truncating the partition changes the training objective.

That change is also Top-$k$'s principal systems advantage: because $Q_{\mathcal H,\ell,u}$ normalizes only over $\mathcal H$, the lens never projects into the complete vocabulary, reducing activation memory \emph{and} projection compute, with $k_{\mathrm{head}}$ as a direct quality--cost control. We evaluate this on GPT-2 Small, where full-KL training remains feasible and supplies a reference: on its residual hookset, $k_{\mathrm{head}}=256$ reduces peak memory from 16.3 to 4.7\,GB and increases throughput by 1.59$\times$, while changing final-layer KL from 0.043 to 0.054.



\subsection{\klapprox: Exact Head, Sampled Tail}
\label{sec:is}
To ameliorate the poor performance of Top-$k$ on early layers, we introduce \klapprox, which keeps the head unnormalized and samples the tail. Let $P_{\mathrm{head}}=\sum_{v\in\mathcal H}P(v|x)$, and draw $t_1,\ldots,t_{k_{\mathrm{tail}}}$ independently from a proposal $R(\cdot|x)$ supported on $V\setminus\mathcal H$. We compute
\begin{equation}
\small
\begin{aligned}
    \widehat{D}_{\text{\klapprox}}(P\|Q_{\ell,u})
    = &
    \sum_{v\in\mathcal H}
    P(v|x)\log\frac{P(v|x)}{Q_{\ell,u}(v|x)} \\
    &+
    \frac{1}{k_{\mathrm{tail}}}
    \sum_{i=1}^{k_{\mathrm{tail}}}
    \frac{P(t_i|x)}{R(t_i|x)}
    \log\frac{P(t_i|x)}{Q_{\ell,u}(t_i|x)}.
\end{aligned}
\label{eq:subset_KL}
\end{equation}
The first line is the exact contribution of the head $\mathcal H$ to the original, untruncated KL. The second estimates the omitted tail sum from sampled tokens, each reweighted by $P/R$ so that in expectation the sampled term reconstructs the complete tail contribution. For any lens-independent proposal positive on the tail, both the estimated objective and its stochastic gradients are therefore unbiased for the full KL. 

\begin{theorem}
Let $t_1, \ldots, t_{k_{\mathrm{tail}}}$ be drawn i.i.d.\ from any proposal $R(\cdot|x)$ supported on $V \setminus \mathcal{H}$ with $R(v|x) > 0$ wherever $P(v|x) > 0$. The \klapprox\ estimator Eq.~\eqref{eq:subset_KL} is unbiased,
\begin{equation*}
\mathbb{E}\left[\hat{D}_{\mathrm{\klapprox}}(P \| Q_{\ell,u})\right] = D_{\mathrm{KL}}(P \| Q_{\ell,u}),
\end{equation*}
and yields unbiased gradients as well: 
\begin{equation*}
    \mathbb{E}\left[\nabla \hat{D}_{\mathrm{\klapprox}}\right] = \nabla D_{\mathrm{KL}}(P \| Q_{\ell,u}). 
\end{equation*}
\label{thm:unbiased_KL_gradients}
\end{theorem}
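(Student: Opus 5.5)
The plan is to split the full KL of Eq.~\eqref{eq:KL_definition} into a head part over $\mathcal H$ and a tail part over $V\setminus\mathcal H$. The head part is computed exactly in Eq.~\eqref{eq:subset_KL} and is deterministic given $x$ and the lens parameters. Unbiasedness therefore reduces to showing that the sampled second line of Eq.~\eqref{eq:subset_KL} is an unbiased estimator of
\[
D_{\mathrm{tail}}=\sum_{v\in V\setminus\mathcal H}P(v|x)\log\frac{P(v|x)}{Q_{\ell,u}(v|x)}.
\]
Here I adopt the usual convention $0\log 0=0$, so tokens with $P(v|x)=0$ contribute nothing.

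For the value statement, I will use linearity of expectation and the fact that the $t_i$ are identically distributed. This reduces the average to a single-sample expectation:
\[
\mathbb{E}_{t\sim R}\Bigl[\tfrac{P(t|x)}{R(t|x)}\log\tfrac{P(t|x)}{Q_{\ell,u}(t|x)}\Bigr]=\sum_{v:\,R(v|x)>0}P(v|x)\log\tfrac{P(v|x)}{Q_{\ell,u}(v|x)}.
\]
The support hypothesis ($R>0$ wherever $P>0$ on the tail, and $R$ supported on $V\setminus\mathcal H$) guarantees this sum covers every tail token with nonzero teacher mass. Adding it to the head term gives $D_{\mathrm{KL}}(P\|Q_{\ell,u})$.

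For the gradient statement, the key point is that neither $R$ nor the sampled indices depend on the lens parameters $\theta=(A_{\ell,u},B_{\ell,u},b_{\ell,u})$. The paper explicitly requires a lens-independent proposal, and $P$ is the frozen teacher. Since $V$ is finite, the expectation is a finite weighted sum $\sum_{(t_1,\dots,t_k)}\prod_i R(t_i|x)\,\hat D(t_1,\dots,t_k;\theta)$ whose weights do not depend on $\theta$. Differentiation therefore commutes with expectation: $\mathbb{E}[\nabla_\theta\hat D]=\nabla_\theta\mathbb{E}[\hat D]=\nabla_\theta D_{\mathrm{KL}}$. I will also note that $Q_{\ell,u}$ here is the full-vocabulary softmax, which is what distinguishes this estimator from Top-$k$. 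Each $\log Q_{\ell,u}(v|x)$ is smooth in $\theta$ because the softmax is strictly positive, and $\eta$ is differentiable. The frozen-normalization smoothness caveat is a mild assumption I would state explicitly, for example an RMSNorm input bounded away from zero.

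The main obstacle is not any calculation but stating the conditions precisely. The interchange of gradient and expectation needs $R$ to be independent of $\theta$; if $R$ were derived from the student, a score-function term would appear and the result would fail. This is the distinction the paper draws in Appendix~\ref{appendix:kl_estimators}. I will also clarify that "unbiased gradients" means unbiased for a fixed $x$, so that averaging over a minibatch of tokens preserves unbiasedness for the expected loss.
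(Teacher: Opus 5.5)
Your proposal is correct and follows essentially the same route as the paper's proof. The head term is exact and deterministic, the tail summand is reduced by linearity over the i.i.d.\ draws to a single importance-weighted expectation that the support condition turns into the full tail sum, and gradient and expectation commute because the estimator is a finite sum whose weights and sampling law do not depend on the lens parameters. Your extra care about the $0\log 0$ convention and differentiability is fine, though the smoothness caveat on $\eta$ is not really needed: the $\epsilon$ in the frozen LayerNorm/RMSNorm already makes it smooth everywhere.
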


\vspace{-2ex}

A complete proof for Theorem~\ref{thm:unbiased_KL_gradients} is presented in Appendix~\ref{app:theory}.
%

Here $Q_{\ell,u}$ is the true student distribution rather than one renormalized over the subset. Evaluating it requires the full-vocabulary log-partition $\log\sum_{v} \exp(z_v)$, which we compute exactly without materializing all $|V|$ logits: we project one chunk of the vocabulary at a time, accumulate its log-sum-exp into a running total, and discard the chunk's logits before projecting the next. Subsampling therefore applies only to which KL summands are evaluated, not to the normalizer.

As a default, we sample from the teacher conditioned on the tail, $R(v|x)=P(v|x)/(1-P_{\mathrm{head}})$ for $v\in V\setminus\mathcal H$. This draws tail tokens in proportion to the same probabilities that weight their KL terms and needs no separate proposal model. 
Retaining the head deterministically removes the most heavily weighted terms from the estimator's variance, which is why \klapprox\ improves on pure teacher sampling.
Appendix~\ref{app:theory} gives the sampling procedure and illustrates the estimator (Fig.~\ref{fig:estimator}).

\paragraph{Evaluation.} Table~\ref{tab:estimator_comparison} compares matched token budgets, and Fig.~\ref{fig:subset_kl_comparison} gives the layerwise profiles. Among the subset objectives, Top-$k$ achieves lower final-layer KL and peak memory at both budgets, whereas \klapprox\ achieves lower mean and early-layer KL. At a $256{+}256$ budget, \klapprox\ halves peak memory from $16.3$ to $8.2$\,GB while processing $50.9$k tokens/s. Its throughput remains near $48$--$51$k tokens/s across the evaluated budgets, whereas Top-$k$ slows from $96.5$k tokens/s at $k=256$ to $28.9$k at $k=1024$. Seed replications preserve the mean and early-layer ordering, while showing that part of the final-layer gap is seed variation. Appendix~\ref{app:ablations} reports the complete sweep and per-seed results.

Setting $k_{\mathrm{head}}=0$ reduces Eq.~\eqref{eq:subset_KL} to pure teacher sampling, the RS-KD baseline (Section~\ref{sec:related_methods}). At a total budget of $512$ it reaches final-layer KL $0.194$, against $0.067$ for \klapprox, and its training loss deteriorates as the sample budget grows. Appendix~\ref{app:ablations} gives the full sweep and diagnostics, and Appendix~\ref{appendix:kl_estimators} compares related estimators.

\paragraph{Choosing an objective.} Neither variant dominates. Top-$k$ is preferable under projection cost, throughput, or maximum context length constraints, particularly for analyses that emphasize later layers. \klapprox\ is preferable when lenses are compared throughout the network, or when preserving the semantics of the full-KL objective matters. We therefore default to \klapprox\ for the trained lenses in the interpretability studies, whose deterministic head is what preserves late-layer fidelity (Fig.~\ref{fig:rs_profiles}), and report Top-$k$ as the lower-memory, higher-throughput alternative.


\begin{table}[t]
\centering
\caption{Matched-budget Subset-KL comparison on GPT-2 Small
($r=64$, step $1{,}000$, residual hookset). Early KL averages
layers 0--3; RS denotes teacher sampling
($k_{\mathrm{head}}=0$). Bold marks the best Top-$k$ or
Top-$k$+IS value, and shaded rows share a token budget.
Complete results are in Table~\ref{tab:estimator_sweep}.}
\label{tab:estimator_comparison}

\resizebox{\columnwidth}{!}{%
\begin{tabular}{llrrrrr}
\toprule
Estimator & Config &
  \multicolumn{3}{c}{KL} &
  \multicolumn{2}{c}{Efficiency} \\
\cmidrule(lr){3-5}
\cmidrule(l){6-7}
& & Final & Mean &
  \shortstack{Early\\(0--3)} &
  \shortstack{Tok/s\\($\times 10^3$)} &
  \shortstack{Peak\\GB} \\
\midrule
Full-KL
  & $|V|$
  & 0.043 & 0.600 & 1.019 & 60.8 & 16.3 \\
\midrule
\rowcolor{shaderow1}
Top-$k$
  & $k=512$
  & 0.050 & 0.903 & 1.631 & \textbf{54.6} & \textbf{4.8} \\
\rowcolor{shaderow2}
  & $k=1024$
  & \textbf{0.046} & 0.780 & 1.413 & 28.9 & 5.0 \\
\midrule
\rowcolor{shaderow1}
Top-$k$+IS
  & $256+256$
  & 0.067 & \textbf{0.637} & \textbf{1.062}
  & 50.9 & 8.2 \\
\rowcolor{shaderow2}
  & $512+512$
  & 0.067 & 0.649 & 1.073 & 49.8 & 8.5 \\
\midrule
\rowcolor{shaderow1}
RS
  & $0+512$
  & 0.194 & 0.879 & 1.334 & 50.0 & 8.2 \\
\rowcolor{shaderow2}
  & $0+1024$
  & 0.656 & 1.309 & 1.693 & 47.5 & 8.4 \\
\bottomrule
\end{tabular}%
}
\end{table}

\begin{figure}[t]
    \centering
    \includegraphics[width=\linewidth]{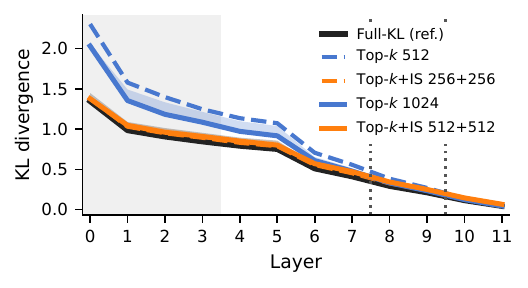}
    \caption{Layerwise KL on GPT-2 Small ($r=64$, step $1{,}000$) at
    matched budgets of 512 (dashed) and 1024 (solid) scored tokens.
    Shading marks layers 0--3; envelopes span three seeds.
    }
    \label{fig:subset_kl_comparison}
\end{figure}

\subsection{A Fused Kernel for Selected Logits}
\label{sec:indexed_logits}

Subset selection introduces a second, implementation-level memory problem that is independent of which objective is used. Both variants require different selected vocabulary rows at each of the $B\!\cdot\!T$ positions in a microbatch. A na\"ive implementation gathers those rows of the unembedding $W_U$ into a $[B\!\cdot\!T,k,d]$ intermediate before multiplying by the translated activation, potentially consuming more memory than the subset objective saves. At $B\!\cdot\!T=8192$, $k=512$, and $d=4096$, this tensor alone occupies $68.7$\,GB in FP32. Our fused CUDA kernel computes each selected logit directly in registers and never forms the gathered tensor, reducing the peak memory to $0.15$\,GB (Appendix~\ref{app:cuda}). 

\paragraph{Combined feasibility.}
Lens parameters and optimizer state persist throughout training and cannot be reduced by microbatching, so low rank is what makes dense 70B coverage possible under our single-device lens placement. Vocabulary-readout activations instead grow with microbatch size, context length, and vocabulary size, and are therefore reducible by microbatching. Subset-KL governs what readout workloads fit at a given budget: at 8B both subset objectives train at 4K context on an A100-40GB, whereas full KL exhausts the device beyond 2K. At 70B, full KL still fits the low-rank configuration at our production microbatch, so Subset-KL is not what makes that individual run possible; it is what determines the context length, batch size, and vocabulary scale reachable without additional hardware, and lets the same framework trade statistical fidelity against memory and throughput. Appendix~\ref{app:memory} gives the complete accounting.

\paragraph{Frontier scale.}
We execute eight optimizer steps of a 126-hookpoint residual-lens stack on LLaMA-3.1-405B-Instruct across $96\times$A100-40GB GPUs. At $r{=}64$ that stack contains $266$M translator parameters, a roughly $127$-fold reduction from the corresponding $33.8$B-parameter dense stack before optimizer state. This run verifies execution of the complete training path at 405B scale; Appendix~\ref{app:memory} gives the configuration, memory measurements, and loss trajectory.

%% file: Sections/5_interp_case_studies.tex
\section{Lens Application Case Studies}
\label{sec:case_studies}

\begin{figure*}[ht]
    \centering
    \includegraphics[width=0.9\linewidth]{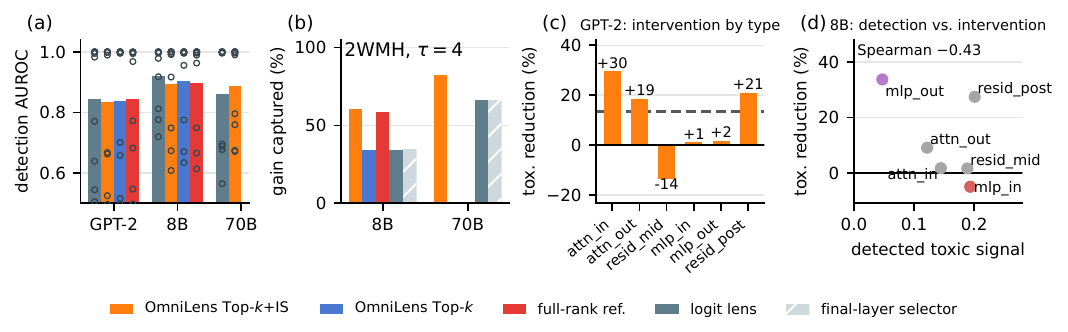}
    \caption{%
    \textbf{Interpretability applications.}
        (a)~Prompt-injection AUROC across ten tasks (bars: means; points: individual tasks).
        (b)~Fraction of achievable 2WMH intervention gain captured at $\tau =4$.
        (c)~GPT-2 toxicity reduction by component; the dashed line marks the prior attention-head-only method.
        (d)~At 8B, detected toxicity and intervention effectiveness are negatively correlated across component types. No full-rank reference is trainable at 70B (Section~\ref{sec:subset_kl}).
    }
    \label{fig:case_summary}
\end{figure*}

We evaluate whether OmniLens preserves the downstream utility of trained lenses by revisiting three applications from the literature: 
\textit{(i)}~prompt-injection detection~\citep{belrose2023elicitinglatentpredictionstransformers}, 
\textit{(ii)}~multi-hop memory injection~\citep{sakarvadia2024memoryinjectionscorrectingmultihop}, 
and 
\textit{(iii)}~toxicity localization~\citep{pettyjohn2025mind}. 
Because the same lens can be attached to residual, attention, and MLP activations, we also extend the toxicity intervention beyond the attention heads examined previously. Unless noted otherwise, results use rank-64 \klapprox\ lenses, with Top-$k$, full-rank, and logit-lens baselines where available. Experimental details/caveats appear in Appendices~\ref{appendix:experimental_setup} and~\ref{app:caveats}.

\paragraph{Prompt-injection detection.}\label{sec:inj_detect}
\citet{belrose2023elicitinglatentpredictionstransformers} show that a prompt injection perturbs the model's intermediate computation before it perturbs the output. Following their protocol, we fit outlier detectors to the layer-by-layer predictions decoded by each lens on clean multiple-choice prompts and test whether they flag prompts carrying an injection attack. On the five tasks where the original study reports near-perfect detection, \klapprox\ attains a mean AUROC of $0.997$ at every scale and matches the full-rank reference within bootstrap uncertainty where that reference exists (Fig.~\ref{fig:case_summary}a). At 70B, where no full-rank reference is available, \klapprox\ detects attacks the logit lens misses on the knowledge tasks (ARC-Easy $0.80$ vs.\ $0.68$; SciQ $0.67$ vs.\ $0.56$).  Protocols and per-task results are in Appendices~\ref{app:inj_detect} and~\ref{app:finegrained}.

\paragraph{Multi-hop factual reasoning.}\label{sec:multihop}
\citet{sakarvadia2024memoryinjectionscorrectingmultihop} attribute multi-hop failures to a recall gap and correct them by \emph{memory injection}: the difference between explicit and implicit intermediate representations is added to the residual stream at a chosen layer. The intervention itself needs no lens; the lens's role is to select \emph{where} to inject. The intervention replicates at scale: on 2WikiMultiHop (2WMH)~\citep{ho2020constructingmultihopqadataset} it raises the model's own $P(\text{answer})$ by 6.6$\times$ at 8B and 5.1$\times$ at 70B. At a deliberately strong intervention scale ($\tau{=}4$), where the causal optimum moves to an interior layer, the \klapprox-selected layer captures $61\%$ (8B) and $82\%$ (70B) of the maximum achievable gain, compared with $34\%$ and $66\%$ for final-layer injection (Fig.~\ref{fig:case_summary}b). \klapprox\ is the only evaluated lens that selects a middle layer and remains near-optimal across all three training seeds. Selection criteria, layer-by-$\tau$ sweeps, controls, and distortion measurements are in Appendix~\ref{app:injection}.

\paragraph{Toxicity localization and intervention.}\label{sec:dart}\label{sec:audit}
Following \citet{pettyjohn2025mind}, we rank attention heads by the prevalence of toxic tokens in their lens predictions, then subtract an unembedding-derived toxicity direction at the selected heads. At 8B, OmniLens reproduces the concentration previously observed on GPT-2: the top five heads carry approximately 35\% of the detected toxic signal, and the strongest head is stable across lens variants and training seeds. Extending the intervention across all six hookpoint types, however, changes the conclusion. The most effective targets lie outside the attention heads examined previously at both GPT-2 and 8B and differ between models (Fig.~\ref{fig:case_summary}c,d). At 8B, detection and intervention rankings are negatively correlated (Spearman $-0.43$): MLP outputs show the weakest detected toxicity but produce the largest reduction when modified, whereas MLP inputs score highly under detection but increase toxicity when modified. Where a behavior is most visible, then, need not be where it is most causally actionable. Complete rankings, intervention sweeps, and the 70B analysis appear in Appendix~\ref{app:dart}. Together, these studies show that OmniLens preserves the established applications of trained lenses while enabling model-wide comparisons that component-specific lens families cannot make.

%% file: Sections/6_conclusion.tex
\section{Discussion and Conclusion}
\label{sec:discussion}


\paragraph{Conclusion.}
OmniLens makes dense trained-lens coverage practical at previously inaccessible model scales by combining low-rank translators, memory-efficient vocabulary-subset objectives, and one framework for residual, attention, and MLP activations. We train a 482-hookpoint lens ensemble for LLaMA-3.3-70B and demonstrate training for eight optimizer steps on LLaMA-3.1-405B. Across three established interpretability applications, the resulting lenses preserve key reference results while enabling comparisons that component-specific methods cannot make, including the finding that the hookpoints where a behavior is most visible and those where intervention is most effective can differ. OmniLens thus moves trained lenses from sparse, specialized readouts toward a model-wide interpretability instrument.

\paragraph{Broader impact.}

Democratizing model-wide interpretability at modern LLM scales matters for AI safety, governance, and research equity. OmniLens enables researchers to apply the same trained-lens framework to anomaly detection, causal hookpoint selection, and behavioral localization throughout an architecture rather than at a few preselected components. Prior work demonstrates how such localization can guide targeted interventions on safety-relevant behavior, including toxicity~\citep{lee2024mechanistic}, refusal~\citep{arditi2024refusal}, and honesty~\citep{zou2025representation}, as well as weight-level editing of specific factual associations~\citep{meng2022locating,meng2023memit}. The same capabilities could expose model vulnerabilities or manipulation points, a dual-use risk shared by interpretability tools generally.

\paragraph{Limitations and future work.}\label{sec:Limitations}
Our experiments cover only components with $d_u = d$. The parameterization extends to narrower components, such as individual attention heads, by adding a map into the residual space, which we do not evaluate here. The two vocabulary-subset objectives make different tradeoffs: Top-$k$ avoids the full-vocabulary projection but is biased, whereas \klapprox\ retains unbiased gradients while still computing the full normalization. These boundaries motivate learned input maps, more efficient unbiased normalization, and converged evaluations beyond 70B.

%% file: Appendix/lens_taxonomy.tex
\section{Lens Taxonomy}
\label{appendix:lens_taxonomy}

Table~\ref{tab:lens_taxonomy} organizes the published (linear) lens constructions as discussed in Section~\ref{sec:LensBasedInterp}. We note there are two linear lens parameterizations in the literature. The most common is to view the lens as a component-to-residual stream map---i.e. $\mathcal{L}_{\ell,u}\in\mathbb{R}^{d\times d_u}$, where $d_u$ is the width of component $u$ (equal to the model width $d$ for every hookpoint we train)---whence composing with the unembedding matrix maps to vocabulary space: 
\begin{equation*}
    Q_{\ell, u} ( \cdot | x) = \softmax
    \left(
        W_U\,
        \eta\!\left(\mathcal{L}_{\ell, u}\, h_{\ell, u} + b_{\ell, u}\right)
    \right).
\end{equation*}
This is the formulation presented in Section~\ref{sec:LensBasedInterp}. In the second parameterization the lens maps directly to vocabulary space. That is, $\mathcal{L}_{\ell,u}\in\mathbb{R}^{|V|\times d_u}$ and 
\begin{equation}
    Q_{\ell, u} ( \cdot | x) = \softmax
    \left(\mathcal{L}_{\ell, u}\, h_{\ell, u} + b_{\ell, u}
    \right).
    \label{eq:attn_lens_param}
\end{equation}
We mark the methods in Table~\ref{tab:lens_taxonomy} employing the parameterization of Eq.~\eqref{eq:attn_lens_param} with a dagger ($\dagger$).
\begin{table*}[ht]
\centering
\caption{Published lens constructions. All lenses use the parameterization Eq.~\eqref{eq:lens_def} unless marked by $\dagger$, in which case they use the parameterization of Eq.~\eqref{eq:attn_lens_param}.}
\label{tab:lens_taxonomy}
\resizebox{\linewidth}{!}{%
\begin{tabular}{lll}
\toprule
Method & Component $u$ & Translator $\mathcal{L}_{\ell,u}$ \\
\midrule
Logit lens \citep{Nostalgebraist2020LogitLens} & residual stream & $I$ (frozen) \\
Tuned lens \citep{belrose2023elicitinglatentpredictionstransformers} & residual stream & full-rank affine, per layer \\
Future lens \citep{pal2023future} & residual stream & full-rank affine, per offset \\
Jump-to-conclusions \citep{din2024jump} & residual stream & full-rank linear, cross-layer \\
Attention lens \citep{sakarvadia2023attentionlenstoolmechanistically}$\dagger$ & attention-head outputs & full-rank affine, per layer \\
Jacobian lens \citep{gurnee2026jlens} & residual stream (mid-layers) & averaged Jacobian (frozen) \\
LoRA Lens \citep{pettyjohn2025mind}$\dagger$ & attention-head outputs & low-rank unembedding update \\
Low-rank lens \citep{trimigno2026lowrank} & residual stream & low-rank residual, full-KL trained \\
OmniLens (ours) & any component & low-rank residual Eq.~\eqref{eq:lora_lens} \\
\bottomrule
\end{tabular}}
\end{table*}

Two axes distinguish the entries of Table~\ref{tab:lens_taxonomy}. The first is \emph{where} the lens reads: prior methods each fix one component type, while the hookpoint-agnostic parameterization of Section~\ref{sec:lora_lens} covers all of them under one training run. The second is \emph{how much} the translator costs: the earlier trained entries use a full-rank $d \times d$ map per hookpoint, the two low-rank predecessors reduce that cost at a single fixed component, and our identity-residual translator Eq.~\eqref{eq:lora_lens} carries the low-rank cost to every component type.

%% file: Appendix/experimental_setup.tex
\section{Experimental Setup and Implementation}
\label{appendix:experimental_setup}


\paragraph{Models.} GPT-2 Small (124M; 12 layers, $d{=}768$, $|V|{=}50{,}257$; HF \texttt{openai-community/gpt2}), LLaMA-3-8B (32 layers, $d{=}4096$, $|V|{=}128{,}256$; HF \texttt{meta-llama/Meta-Llama-3-8B-Instruct}), and LLaMA-3.3-70B (80 layers, $d{=}8192$, $|V|{=}128{,}256$; HF \texttt{meta-llama/Llama-3.3-70B-Instruct}; abbreviated LLaMA-3-70B in the main text). All models are frozen bf16 checkpoints; the LLaMA models are the instruct-tuned variants, while GPT-2 is the base pretrained model. The 405B feasibility run (Section~\ref{sec:subset_kl}) additionally uses LLaMA-3.1-405B-Instruct (126 layers, $d{=}16{,}384$, $|V|{=}128{,}256$; HF \texttt{meta-llama/Llama-3.1-405B-Instruct}), likewise a frozen bf16 instruct checkpoint.

\paragraph{Data.} All lenses were trained on \texttt{The Pile}~\citep{gao2020pile}, tokenized into non-overlapping chunks of the training sequence length (stride equals sequence length). No additional preprocessing or filtering was applied.\footnote{\label{fn:corpus}A preliminary swap to an alternative corpus produced negligible differences in final-layer and layerwise KL, suggesting lens quality is determined primarily by the model's representations rather than the training distribution. We did not study this systematically.}

\paragraph{Optimization.} We used AdamW with default moments $(\beta_1, \beta_2) = (0.9, 0.999)$, weight decay $0$, gradient clipping at norm $1.0$, and learning rate $10^{-3}$ with cosine decay to zero. Optimizer steps target an effective batch of $2^{18} = 262{,}144$ tokens, accumulated over an integer number of full microbatches (a step never ends mid-microbatch; when the token target is not divisible by the per-microstep token count, the trainer rounds \emph{up} to the next full microbatch). The realized effective batches are therefore $262{,}144$ tokens for GPT-2 ($8$ microsteps of $32 \times 1024$ on a single A100-40GB), $327{,}680$ for LLaMA-3-8B ($4$ microsteps of $2 \times 1024$ per rank, DDP over $10$ nodes $= 40$ ranks), and $393{,}216$ for LLaMA-3-70B ($2$ microsteps of $2 \times 1024$ per rank over $96$ FSDP ranks on $24$ nodes of $4\times$A100-40GB). GPT-2 trains for $1{,}000$ steps with no warmup; the LLaMA lens sets train for $250$ steps with $50$ warmup steps. All runs use bf16 mixed precision with fp32 loss reductions and log-partition computation, and training seed $0$ unless stated otherwise (see \emph{Checkpoint selection and seeds}).

\paragraph{Software.} Lens training uses Python 3.10, PyTorch 2.10 (CUDA 12.8), Transformers 5.3, and Datasets 4.6 on Linux, on the $4\times$A100-40\,GB nodes described above. KL evaluation reuses the upstream tuned-lens loop, which ships with the released OmniLens code alongside per-run configurations (\texttt{run\_config.csv}). Hyperparameter exploration is reported where it occurred: rank $r$ (nine values; Table~\ref{tab:lora_fidelity}), estimator budgets $k$ and $(k_{\mathrm{head}}, k_{\mathrm{tail}})$ (Table~\ref{tab:estimator_sweep}), and training length (convergence diagnostics; Appendix~\ref{app:ablations}) were swept with the selection criteria stated alongside each; optimizer settings follow standard tuned-lens practice and were not tuned.

\paragraph{Lenses and objectives.} LoRA lenses use $r{=}64$ by default with $\alpha{=}r$ (unit scaling). $A$ was initialized as Xavier uniform \cite{glorot2010understanding} while $B$ and $b$ were initialized to zero, so that the lens is the identity map at initialization. 

{\em Estimator budgets:} the GPT-2 sweeps are as listed in Tables~\ref{tab:estimator_sweep} and~\ref{tab:rank_sensitivity}; the production LLaMA lenses use Top-$k$ with $k{=}512$ and \klapprox\ with $(k_{\mathrm{head}}, k_{\mathrm{tail}}) = (512, 1024)$ under the teacher-tail proposal. Full-KL runs use chunked evaluation with chunk size $4{,}096$. 

{\em Hookpoints:} Residual places one translator per layer while expanded places six hookpoint types per layer plus embedding and final-norm readouts ($6L{+}2$: 74 / 194 / 482 hookpoints at GPT-2 / LLaMA-3-8B / LLaMA-3-70B). Dense coverage costs about the same as a standard one-per-layer full-rank stack at GPT-2 and far less at scale (GPT-2: 7.3M parameters vs.\ 7.1M full-rank residual-only; 8B: 102M vs.\ 537M; 70B: 509M vs.\ 5.4B).

{\em Baselines:} We select the strongest set of baselines that are trainable for a given model size. For GPT-2 every variant is trainable, including the full-rank full-KL reference. For LLaMA-3-8B a multi-node full-rank reference exists on the residual hookset; at 70B no full-rank variant is trainable (optimizer state $388$\,GB; Figure~\ref{fig:needboth_opt}) and the dense LoRA${+}$Subset-KL stack trains at a measured $34.7$\,GB per GPU. 

{\em Training costs:} training one production OmniLens stack (expanded hookset, LoRA $r{=}64$, Subset-KL) costs a few GPU-hours at GPT-2, ${\approx}20$ node-hours at 8B, and ${\approx}360$ node-hours at 70B on $4\times$A100-40\,GB nodes.

\paragraph{Checkpoint selection and seeds.}GPT-2 comparisons use step $1{,}000$; LLaMA lenses use the end of the annealed schedule (step $250$); Appendix~\ref{app:ablations} reports convergence diagnostics supporting both budgets. Headline tables report training seed $0$; two seed studies quantify training-seed variability under otherwise identical recipes. At GPT-2, the full-rank baseline, LoRA full-KL, Top-$k$ ($k{=}1024$), \klapprox\ ($512{+}512$), and RS ($k_{\mathrm{head}}{=}0$) configurations were each retrained at seeds $1$--$2$. Mean and early-layer KL values are stable for every estimator (sd $\le 0.04$ nats), while final-layer KL sd ranges from $0.0007$ (Top-$k$) to $0.012$ (\klapprox). The full-rank baseline's own final KL spans $0.039$--$0.049$. 

For LLaMA-3-8B, the three lens variants used in the case studies---\klapprox, Top-$k$, and the full-rank residual reference---were each retrained at seeds $1$--$2$ under the identical 250-step schedule; the resulting DART and injection-selection stability is reported in Section~\ref{sec:case_studies}. The detection experiments ensemble the isolation forest over five seeds and report bootstrap confidence intervals (Appendix~\ref{app:inj_detect}). Full per-run configurations (\texttt{run\_config.csv}) ship with the released OmniLens code.

%% file: Appendix/Implementations.tex
\subsection{Implementation Details}
\label{appendix:implementation}

\subsubsection{Hookbox System Architecture}

The hookbox system provides a component-agnostic interface for attaching lenses to arbitrary intermediate activations during the forward pass. Hooks toggle independently at any granularity (whole blocks, individual heads, or projections within a head), so one instrumented model serves coarse, fine-grained, and targeted analyses (Figure~\ref{fig:hookbox}). This section details the key design decisions and implementation strategies.

\begin{figure*}[t]
    \centering
    \includegraphics[width=0.95\linewidth]{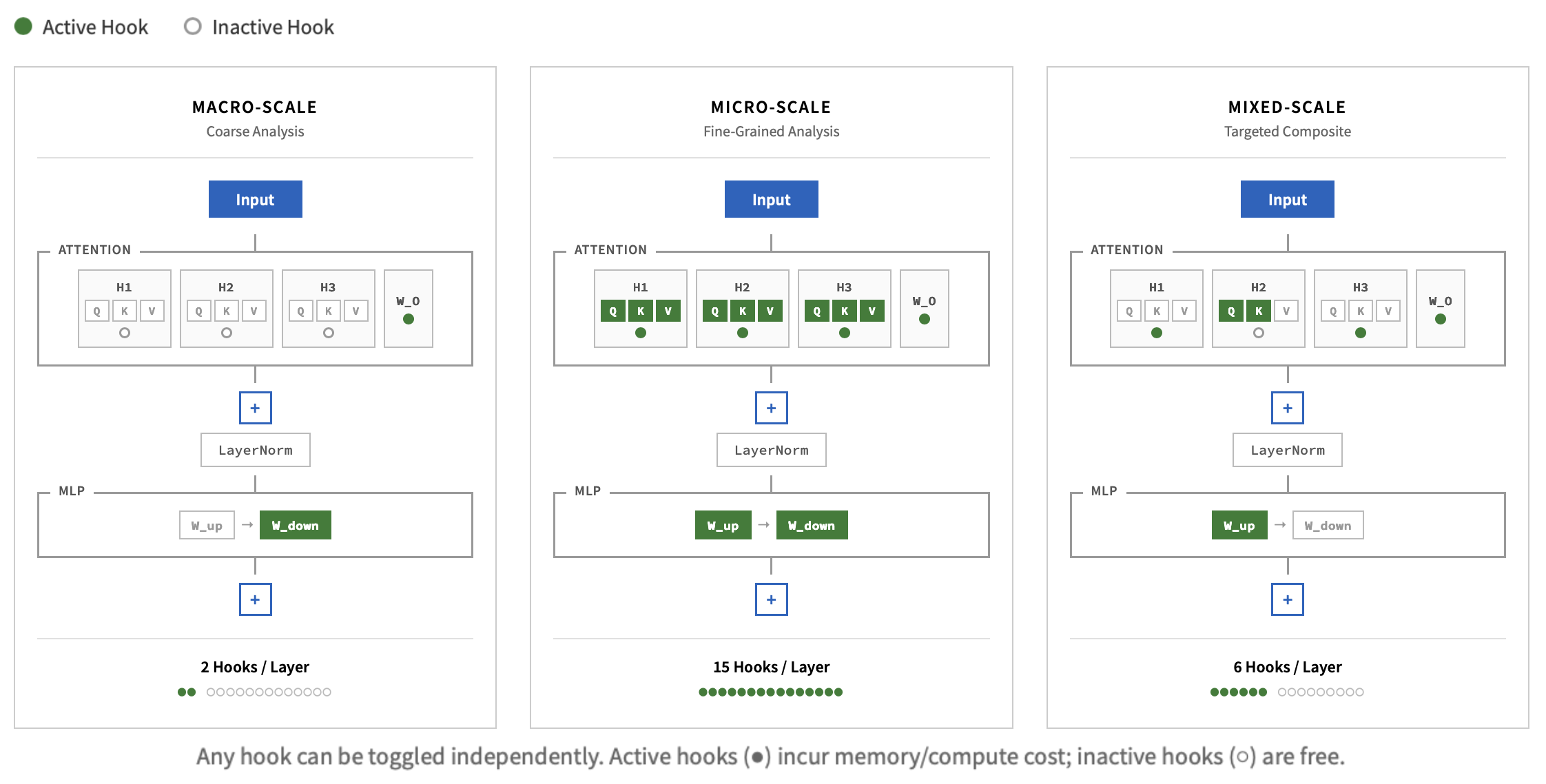}
\caption{Dynamic hook configurations in \texttt{hookbox}. Any hook can be toggled independently: a macro-scale configuration (left) reads only block outputs, a micro-scale configuration (center) instruments every projection of every head, and mixed-scale configurations (right) target specific components. Active hooks ($\bullet$) incur memory and compute cost; inactive hooks ($\circ$) are free.}
    \label{fig:hookbox}
\end{figure*}

\paragraph{Distributed Model Handling}

Modern large language models are typically wrapped in distributed training frameworks (DDP, FSDP, DeepSpeed) that shard parameters and optimizer states across devices. These wrappers introduce additional module hierarchy that must be unwrapped to access the underlying model architecture.

Our implementation automatically detects and unwraps these wrappers using a recursive traversal:
\begin{verbatim}
def unwrap_model(model):
    """Recursively unwrap distributed wrappers."""
    while hasattr(model, 'module'):
        model = model.module
    if hasattr(model, '_fsdp_wrapped_module'):
        model = model._fsdp_wrapped_module
    return model
\end{verbatim}

For FSDP models using ZeRO-3 (full parameter sharding), individual modules may be sharded across devices. We implement an activation gathering mechanism that consolidates sharded tensors before passing to the lens:
\begin{verbatim}
@torch.no_grad()
def gather_from_shards(tensor, process_group):
    """Gather tensor from FSDP shards."""
    world_size = dist.get_world_size(process_group)
    gather_list = [torch.zeros_like(tensor) 
                   for _ in range(world_size)]
    dist.all_gather(gather_list, tensor, 
                    group=process_group)
    return torch.cat(gather_list, dim=0)
\end{verbatim}

\paragraph{Activation Checkpointing Compatibility}

Gradient checkpointing (activation recomputation) reduces memory by discarding intermediate activations during the forward pass and recomputing them during the backward pass. However, this creates a challenge for hooks: they are invoked twice (once during initial forward, once during recomputation), potentially double-counting activations.

Recomputation passes are marked with an explicit context-manager flag that the checkpointed forward is wrapped in:
\begin{verbatim}
class CheckpointingState:
    _is_recomputing: bool = False

    @classmethod
    def is_recomputing(cls):
        return cls._is_recomputing

    @classmethod
    @contextmanager
    def recomputation_context(cls):
        old = cls._is_recomputing
        cls._is_recomputing = True
        try:
            yield
        finally:
            cls._is_recomputing = old
\end{verbatim}

Hooks skip execution during recomputation to avoid duplicate processing:
\begin{verbatim}
def hook_fn(module, input, output):
    if CheckpointingState.is_recomputing():
        return output  # Skip during recompute
    activations = process_activations(output)
    return output
\end{verbatim}

\paragraph{Hook Registration via Predicates}

Rather than hardcoding module names, we allow users to specify attachment points via predicate functions over (name, module) pairs. This provides flexibility and architectural agnostic attachment:

\begin{verbatim}
def register_hooks(model, predicate, hook_fn):
    """Register hooks on modules matching 
    predicate."""
    handles = []
    for name, module in model.named_modules():
        if predicate(name, module):
            handle = module.register_forward_hook(
            hook_fn
            )
            handles.append(handle)
    return handles

# Example usage
pred = lambda n, m: 'layer' in n and 'output' in n
handles = register_hooks(model, pred, my_hook)
\end{verbatim}

Common predicates include:
\begin{itemize}
\item Residual streams: \texttt{lambda n,m: 'layer.\{\}.output' in n}
\item All attention heads: \texttt{lambda n,m: isinstance(m, AttentionHead)}
\item Specific layers: \texttt{lambda n,m: n in ['layer.0', 'layer.10', 'layer.20']}
\item MLP sublayers: \texttt{lambda n,m: 'mlp' in n and 'post' in n}
\end{itemize}

\subsubsection{Numerical Stability Considerations}

\paragraph{Mixed Precision Training}

We use BF16 mixed precision to reduce memory and accelerate training. However, certain operations require FP32 for numerical stability:
\begin{itemize}
\item Logit computation accumulates in FP32 (fused kernel)
\item KL divergence uses log-space computations to avoid underflow
\item Gradient clipping operates on FP32 master weights
\item Normalization layers maintain FP32 running statistics
\end{itemize}

\paragraph{Softmax and Log-Softmax Stability}

Computing KL divergence requires both $\log P$ and $\log Q$. We use the log-sum-exp trick to prevent overflow/underflow:
\begin{verbatim}
def stable_log_softmax(logits):
    """Numerically stable log-softmax."""
    max_logit = logits.max(dim=-1, keepdim=True)[0]
    shifted = logits - max_logit
    return shifted - torch.log(
        torch.exp(shifted).sum(dim=-1, keepdim=True)
    )
\end{verbatim}

The two Subset-KL modes differ here. Top-$k$ renormalizes over the selected subset only (a deliberately truncated objective). The \klapprox\ mode instead uses the \emph{exact} full-vocabulary log-partition: the student's full logit row is produced by a single fused matmul, reduced immediately to its logsumexp, and only the selected subset of logits is retained ($\mathcal{O}(N \cdot V)$ transient per site, a few hundred MB at our configurations, freed by layer-wise backward). Gradients therefore flow to every logit through the partition term, which is what makes the estimator's gradients exactly unbiased (Theorem~\ref{thm:unbiased_KL_gradients}); we verify this numerically against full-KL autograd.

\paragraph{Numerical Guards}

Teacher and student log-probabilities are computed directly with fp32
\texttt{log\_softmax}; probabilities are never reconstructed by exponentiation
and re-floored, so the teacher distribution entering the loss is unmodified.
The tail mass is computed by directly summing the teacher's tail
probabilities in fp32 (not as $1-P_{\mathrm{head}}$, which would lose
precision when the head mass is close to one). Two guards exist in the
sampling path: that tail-mass sum is floored at $10^{-12}$ where it appears
as a normalizing denominator (preventing division by zero only in the
degenerate case where the head captures all numerical mass), and proposal
probabilities are floored at the fp32 subnormal boundary ($10^{-45}$) before
their logarithm. Neither guard biases the estimator: a token with zero
proposal probability can never be drawn by multinomial sampling, so the floor
is never active at a sampled index and the importance weights of
Eq.~\eqref{eq:subset_KL} are unaffected. (This argument concerns
numerics only; the support condition of
Theorem~\ref{thm:unbiased_KL_gradients}, $R(v|x)>0$ wherever $P(v|x)>0$ on
the tail, must hold for the estimator itself, and the teacher-tail default
satisfies it by construction.)

\subsubsection{Distributed Training Configuration}

Table~\ref{tab:distributed_config} summarizes the distributed training strategy used at each model scale. Full configuration details are available in \texttt{OmniLens/configs/distributed/}

\begin{table}[h]
\centering
\caption{Distributed training configuration for different model scales.}
\label{tab:distributed_config}
\small
    \begin{tabular}{llp{0.4\linewidth}}
    \toprule
    Model & Strategy & Configuration \\
    \midrule
    GPT-2 (124M) & Single GPU & 1$\times$ A100-40GB, 8 microsteps/step \\
    \midrule
    LLaMA-3-8B & DDP & 10 nodes $\times$ 4 A100-40GB, 4 microsteps/step \\
    \midrule
    LLaMA-3.3-70B & FSDP & 24 nodes $\times$ 4 A100-40GB, full parameter sharding (ZeRO-3), 2 microsteps/step\\
    \bottomrule
    \end{tabular}
\end{table}

%% file: Appendix/AddAblations.tex
\section{Additional Ablations}
\label{app:ablations}

\subsection{Rank Ablation: Full Layerwise Results}
\label{appendix:rank_ablation}

Table~\ref{tab:lora_fidelity} is the complete rank sweep summarized by
Figure~\ref{fig:rank_tradeoff} and Table~\ref{tab:rank_summary} in
Section~\ref{sec:lora_lens}.
Figures~\ref{fig:rank_ablation_appendix}--\ref{fig:rank_rankcorr_appendix}
show the full layerwise heatmaps for all metrics reported in
Section~\ref{sec:lora_lens}. All metrics show the same qualitative pattern:
at the late layers, a small rank already closes most of the gap to the
full-rank reference, and increasing $r$ closes it further; layer 0's gap
shrinks far more slowly, so the earliest layers remain the binding
constraint at every rank.

\begin{table*}[t]
    \centering
    \caption{Full rank ablation on GPT-2 Small over 131{,}072 Pile test tokens.
    KL is to the teacher; Top-1, Pearson $\rho$ (8{,}192 positions/layer), and
    Kendall $\tau$@100 (512 positions/layer) are vs.\ the full-rank tuned-lens
    baseline. \emph{Final}/\emph{mean} = final layer vs.\ average over 12 layers.
    \colorbox{shaderow1}{Shaded} row ($r{=}64$) is the recommended default;
    \textbf{bold} marks KL values that numerically exceed the baseline (no
    statistical claim: the baseline's own final KL varies $0.039$--$0.049$
    across training seeds, wider than these gaps).}
    \label{tab:lora_fidelity}
    \resizebox{\textwidth}{!}{%
    \begin{tabular}{lrrrrrrrrrr}
        \toprule
        Config. & Params
                & Final KL & Mean KL
                & Final Top-1 & Mean Top-1
                & Final $\rho$ & Mean $\rho$
                & Final $\tau$ & Mean $\tau$ \\
        \midrule
        Full-rank & 100\%
                  & 0.0411 & 0.543
                  & 1.000 & 1.000
                  & 1.000 & 1.000
                  & 1.000 & 1.000 \\
        \midrule
        $r = 1$   & 0.3\%
                  & 0.0760 & 1.037
                  & 0.837 & 0.472
                  & 0.971 & 0.787
                  & 0.596 & 0.133 \\
        $r = 4$   & 1.0\%
                  & 0.0574 & 0.878
                  & 0.853 & 0.535
                  & 0.975 & 0.855
                  & 0.638 & 0.220 \\
        $r = 8$   & 2.1\%
                  & 0.0546 & 0.799
                  & 0.859 & 0.570
                  & 0.978 & 0.885
                  & 0.639 & 0.262 \\
        $r = 16$  & 4.2\%
                  & 0.0526 & 0.755
                  & 0.868 & 0.599
                  & 0.978 & 0.903
                  & 0.662 & 0.295 \\
        $r = 32$  & 8.3\%
                  & 0.0479 & 0.696
                  & 0.878 & 0.655
                  & 0.981 & 0.925
                  & 0.670 & 0.363 \\
        \rowcolor{shaderow1}
        $r = 64$  & 16.7\%
                  & 0.0431 & 0.600
                  & 0.888 & 0.703
                  & 0.984 & 0.941
                  & 0.693 & 0.423 \\
        $r = 128$ & 33.3\%
                  & \textbf{0.0402} & 0.574
                  & 0.900 & 0.746
                  & 0.986 & 0.952
                  & 0.718 & 0.479 \\
        $r = 256$ & 66.7\%
                  & \textbf{0.0382} & 0.555
                  & 0.907 & 0.773
                  & 0.987 & 0.959
                  & 0.739 & 0.531 \\
        $r = 384$ & 100\%
                  & \textbf{0.0362} & \textbf{0.539}
                  & 0.915 & 0.793
                  & 0.989 & 0.964
                  & 0.762 & 0.574 \\
        \bottomrule
    \end{tabular}}
\end{table*}

\begin{figure}[t]
    \centering
    \includegraphics[width=\linewidth]{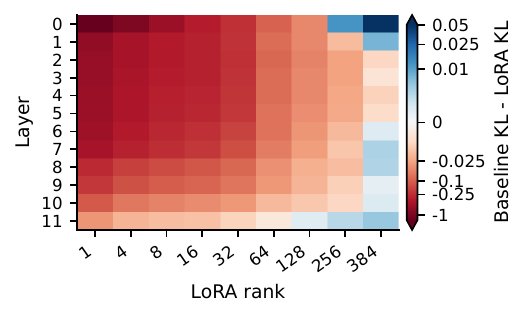}
    \caption{%
        Layerwise KL difference (baseline KL minus LoRA KL) by layer and
        rank, GPT-2 Small. Red indicates LoRA exceeds the full-rank
        baseline; blue indicates LoRA improves upon it.
    }
    \label{fig:rank_ablation_appendix}
\end{figure}

\begin{figure*}[t]
    \centering
    \includegraphics[width=\linewidth]{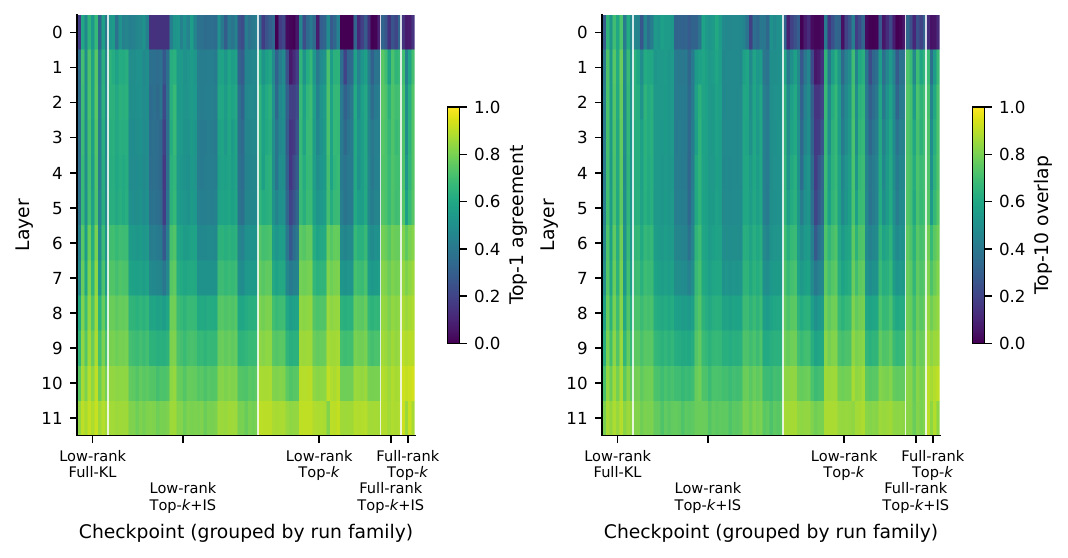}
    \caption{%
        Top-1 agreement (\emph{left}) and top-10 overlap (\emph{right})
        between LoRA and full-rank lens outputs by layer, over every
        evaluated checkpoint grouped by run family (white separators).
    }
    \label{fig:rank_agreement_appendix}
\end{figure*}

\begin{figure*}[t]
    \centering
    \includegraphics[width=\linewidth]{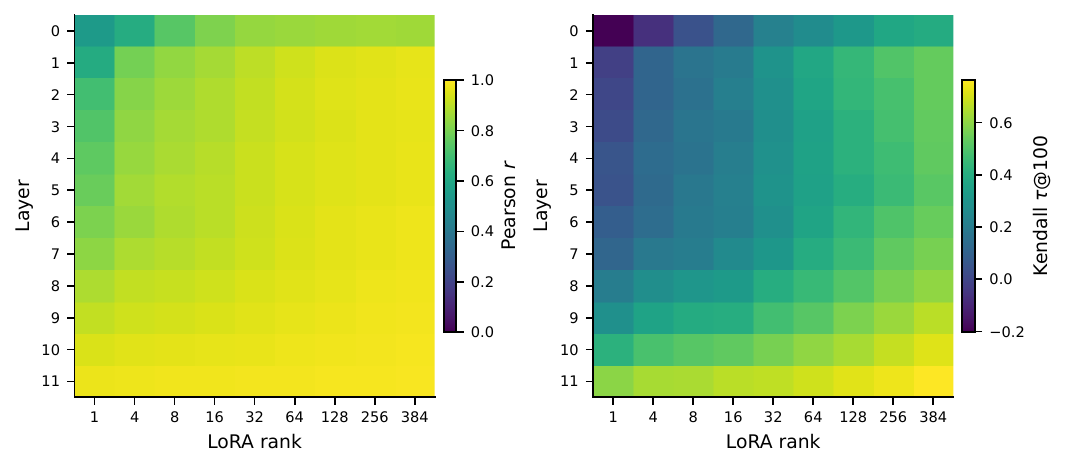}
    \caption{%
        Token-level Pearson $\rho$ (\emph{left}) and Kendall $\tau$@100
        (\emph{right}) against the full-rank baseline, by layer and LoRA
        rank, GPT-2 Small.
    }
    \label{fig:rank_rankcorr_appendix}
\end{figure*}

\subsection{Estimator Fidelity and Rank Sensitivity}
\label{appendix:estimator_fidelity}

Table~\ref{tab:token_fidelity} reports token-level fidelity of the
practically trained lenses to the full-KL reference, and
Table~\ref{tab:rank_sensitivity} shows that the estimator ranking of
Section~\ref{sec:subset_kl} (Top-$k$ wins final-layer KL; \klapprox\ wins
mean and early-layer KL) is stable across every LoRA rank swept in
Section~\ref{sec:lora_lens}. Figure~\ref{fig:rs_profiles} shows the layerwise
profiles of the RS (pure teacher sampling) baseline of
Table~\ref{tab:estimator_comparison}, and Table~\ref{tab:estimator_sweep}
reports the complete budget sweep behind that table's matched-budget subset.

\begin{table*}[t]
\centering
\caption{Complete Subset-KL estimator sweep (GPT-2 Small, $r=64$, step 1000,
residual hookset); Table~\ref{tab:estimator_comparison} shows the
matched-budget subset, and columns are as defined there. Tok/s and Peak GB
are median steady-state throughput and peak per-GPU memory, re-measured with
the released trainer (one node, DDP over 4$\times$A100-40GB, identical batch
recipe). RS rows are trained at three seeds per budget up to Total 1024; KL
cells are seed 0, with cross-seed sd $\le 0.011$ on every reduction.
\textbf{Bold} marks the best per column among the practical estimators
(Top-$k$ and Top-$k$+IS).}
\label{tab:estimator_sweep}
\setlength{\tabcolsep}{7pt}
\begin{tabular}{llrrrrrr}
\toprule
Estimator & Config & Total & Final KL & Mean KL
  & \multicolumn{1}{c}{\shortstack{Early KL\\(0--3)}}
  & \multicolumn{1}{c}{Tok/s}
  & \multicolumn{1}{c}{Peak GB} \\
\midrule
\multicolumn{8}{l}{\textit{Reference}} \\[1pt]
Full-KL & $|V|$ & $|V|$ & 0.043 & 0.600 & 1.019 & 60{,}849 & 16.3 \\

\midrule

\multicolumn{8}{l}{\textit{Top-$k$}} \\[1pt]
& $k=256$              &  256 & 0.054 & 1.053 & 1.893 & \textbf{96{,}458} & \textbf{4.7} \\
\rowcolor{shaderow1}
& $k=512\phantom{0}$   &  512 & 0.050 & 0.903 & 1.631 & 54{,}590 & 4.8 \\
& $k=768$              &  768 & 0.046 & 0.821 & 1.493 & 37{,}754 & 4.9 \\
\rowcolor{shaderow2}
& $k=1024$             & 1024 & \textbf{0.046} & 0.780 & 1.413 & 28{,}918 & 5.0 \\

\midrule

\multicolumn{8}{l}{\textit{Top-$k$+IS}} \\[1pt]
\rowcolor{shaderow1}
& $256+256\phantom{0}$ &  512 & 0.067 & 0.637 & 1.062 & 50{,}911 & 8.2 \\
\rowcolor{shaderow2}
& $512+512\phantom{0}$ & 1024 & 0.067 & 0.649 & 1.073 & 49{,}820 & 8.5 \\
& $512+1024$           & 1536 & 0.065 & 0.637 & 1.054 & 49{,}042 & 8.7 \\
& $768+768\phantom{0}$ & 1536 & 0.068 & \textbf{0.632} & \textbf{1.041} & 48{,}987 & 8.8 \\
& $1024+512\phantom{0}$& 1536 & 0.065 & 0.638 & 1.060 & 48{,}896 & 8.8 \\
& $1024+1024$          & 2048 & 0.065 & 0.636 & 1.052 & 48{,}279 & 9.1 \\

\midrule

\multicolumn{8}{l}{\textit{RS (pure sampling)}} \\[1pt]
\rowcolor{shaderow1}
& $0+512\phantom{00}$  &  512 & 0.194 & 0.879 & 1.334 & 50{,}017 & 8.2 \\
\rowcolor{shaderow2}
& $0+1024\phantom{0}$  & 1024 & 0.656 & 1.309 & 1.693 & 47{,}458 & 8.4 \\
& $0+1536\phantom{0}$  & 1536 & 1.073 & 1.618 & 1.944 & 45{,}518 & 8.7 \\
\bottomrule
\end{tabular}
\end{table*}

\paragraph{Seed stability and pure-sampling diagnostics.}
Across three seeds at Total$=1024$, the mean and early-layer ordering of
Top-$k$, \klapprox, and full KL is unchanged; the standard deviation is at
most $0.04$ nats for those reductions. The final-layer gap is less stable:
\klapprox\ spans $0.045$--$0.067$, whereas Top-$k$ yields $0.046\pm0.001$.

For pure teacher sampling, increasing the sampled budget does not improve the
fixed-step optimization. Final-layer KL rises from $0.194$ at Total$=512$ to
$0.656$ at Total$=1024$ and $1.073$ at Total$=1536$, despite the estimator
remaining unbiased. The degradation is present in the training objective
itself: the Total$=512$ loss decreases from $2.42$ to $2.32$ through step
$1{,}000$, whereas the Total$=1536$ loss rises from $3.58$ at step $500$ to
$4.09$ at step $1{,}000$. The learning rate, optimizer, batch construction,
step count, and random seed are unchanged; only $k_{\mathrm{tail}}$ differs.
This supports the narrower conclusion that the fixed recipe exhibits
budget-dependent optimization instability; we do not claim that larger
unbiased samples are intrinsically harmful.

\paragraph{Training-budget diagnostics.}
The GPT-2 budgets of the estimator tables are past convergence: under the
fixed schedule, every variant's KL at step $250$ is $12$--$35\%$ above its
step-$1{,}000$ value, with 5\%-convergence (the earliest step from which KL
stays within $5\%$ of its final value) between steps $625$ and $875$. Held-out
KL improves monotonically with continued training for every full-KL, Top-$k$,
and \klapprox\ run at all three scales; the pure-sampling instability above is
the sole exception. The LLaMA recipe (cosine anneal at $250$) is
schedule-converged: at 8B every reduction flattens by steps $150$--$200$ with
tight seed spread, and at 70B mean KL converges by roughly step $100$ across
the $11$ production checkpoints. Convergence order is layerwise at 70B: early
layers reach their floor by about step $80$, the last layers near step $180$,
and the final layer is still improving at the end of the schedule, so longer
schedules chiefly buy final-layer fidelity. These diagnostics use reduced
evaluation budgets ($32$k Pile tokens at 8B; $4$k WikiText-2 tokens~\citep{merity2016pointersentinelmixturemodels} at 70B) and are
trajectory-internal; they support convergence claims, not cross-protocol KL
comparisons.

\begin{figure}[t]
    \centering
    \includegraphics[width=\linewidth]{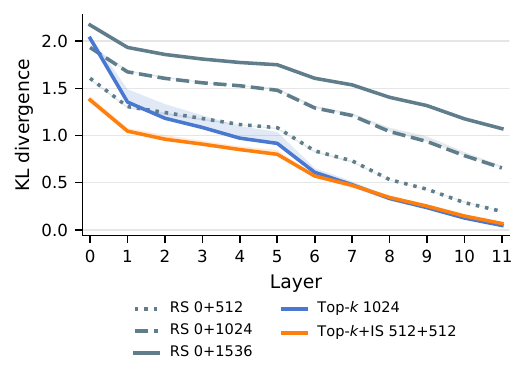}
    \caption{%
        Layerwise KL of RS (pure teacher sampling, $k_{\mathrm{head}}{=}0$;
        grey, line style by total budget) against the two subset estimators
        at Total\,=\,1024. RS tracks or beats Top-$k$ through the early and
        middle layers but fails to close the final layers, and degrades
        uniformly as its sampled budget grows. Curves are seed 0; shaded
        envelopes span min--max over three training seeds where available
        (RS 1536 is single-seed).
    }
    \label{fig:rs_profiles}
\end{figure}

\begin{table*}[t]
\centering
\caption{Token-level fidelity of LoRA $r{=}64$ lenses trained with each
estimator (Top-$k$ $k=512$; IS $(512,512)$), measured against the full-rank
tuned-lens baseline of Table~\ref{tab:lora_fidelity}. The Full-KL row is the
LoRA lens trained with the exact loss (the ceiling the estimators
approach), not a self-comparison. Pearson $\rho$ over full-vocabulary
log-probs; Kendall $\tau$@100 over the top-100 union; Top-1 argmax agreement;
Top-10 mean top-set overlap.}
\label{tab:token_fidelity}
\setlength{\tabcolsep}{4pt}
\resizebox{1.0\linewidth}{!}{
    \begin{tabular}{lcccccccccccc}
    \toprule
    & \multicolumn{3}{c}{Pearson $\rho$}
    & \multicolumn{3}{c}{Kendall $\tau$@100}
    & \multicolumn{3}{c}{Top-1}
    & \multicolumn{3}{c}{Top-10} \\
    \cmidrule(lr){2-4}\cmidrule(lr){5-7}\cmidrule(lr){8-10}\cmidrule(l){11-13}
    Config
      & Final & Mean & Early & Final & Mean & Early
      & Final & Mean & Early & Final & Mean & Early \\
    \midrule
    Full-KL (ref.)
      & 0.984 & 0.941 & 0.911 & 0.697 & 0.421 & 0.335
      & 0.888 & 0.703 & 0.618 & 0.855 & 0.705 & 0.653 \\[2pt]
    Top-$k$, $k=512$
      & 0.975 & 0.797 & 0.617 & 0.661 & 0.195 & $-0.077^{\dagger}$
      & 0.883 & 0.628 & 0.461 & 0.848 & 0.586 & 0.411 \\
    Top-$k$+IS $(512,512)$
      & 0.939 & 0.909 & 0.884 & 0.612 & 0.332 & 0.243
      & 0.858 & 0.651 & 0.561 & 0.820 & 0.649 & 0.590 \\
    \bottomrule
    \end{tabular}
}
\smallskip
\raggedright
{\footnotesize
$^{\dagger}$Negative Kendall indicates the top-100 token rankings at layers~0--3
are slightly anti-correlated with the reference; IS restores a positive early
correlation ($+0.243$) at the same head budget.
Early = mean over layers 0--3; Final = layer 11.}
\end{table*}

\begin{table*}[t]
\centering
\caption{Rank sensitivity at fixed budget (Top-$k$ $k=512$; IS $(512,512)$),
as KL to the teacher at step 1000. \colorbox{shaderow1}{Shaded} row ($r{=}64$)
is the recommended default; \textbf{bold} marks the better practical estimator
(Top-$k$ vs.\ IS) per cell, with Full-KL as the reference. Top-$k$ wins
final-layer KL; IS wins mean and early-layer KL.}
\label{tab:rank_sensitivity}
\setlength{\tabcolsep}{7pt}
\begin{tabular}{r rrr rrr rrr}
\toprule
& \multicolumn{3}{c}{Final KL}
& \multicolumn{3}{c}{Mean KL}
& \multicolumn{3}{c}{Early KL (0--3)} \\
\cmidrule(lr){2-4}\cmidrule(lr){5-7}\cmidrule(lr){8-10}
$r$
  & Full-KL & Top-$k$ & IS
  & Full-KL & Top-$k$ & IS
  & Full-KL & Top-$k$ & IS \\
\midrule
  1 & 0.076 & \textbf{0.074} & \textbf{0.074} & 1.037 & 1.377 & \textbf{0.962} & 1.747 & 2.500 & \textbf{1.648} \\
  4 & 0.057 & \textbf{0.061} & 0.069 & 0.878 & 1.344 & \textbf{0.826} & 1.484 & 2.617 & \textbf{1.410} \\
  8 & 0.055 & \textbf{0.058} & 0.065 & 0.799 & 1.129 & \textbf{0.753} & 1.339 & 2.041 & \textbf{1.275} \\
 16 & 0.053 & \textbf{0.054} & 0.083 & 0.755 & 1.045 & \textbf{0.704} & 1.256 & 1.872 & \textbf{1.174} \\
 32 & 0.048 & \textbf{0.052} & 0.070 & 0.696 & 0.989 & \textbf{0.661} & 1.170 & 1.762 & \textbf{1.094} \\
\rowcolor{shaderow1}
 64 & 0.043 & \textbf{0.050} & 0.067 & 0.600 & 0.903 & \textbf{0.649} & 1.019 & 1.631 & \textbf{1.073} \\
128 & 0.040 & \textbf{0.045} & 0.065 & 0.574 & 0.859 & \textbf{0.610} & 0.983 & 1.571 & \textbf{1.019} \\
256 & 0.038 & \textbf{0.043} & 0.065 & 0.555 & 0.831 & \textbf{0.591} & 0.951 & 1.516 & \textbf{0.973} \\
384 & 0.036 & \textbf{0.045} & 0.057 & 0.539 & 0.819 & \textbf{0.574} & 0.926 & 1.489 & \textbf{0.964} \\
\bottomrule
\end{tabular}
\end{table*}

\subsection{Scaling Comparison: Full Results}
\label{appendix:scaling_full}

Table~\ref{tab:scaling_full} reports measured single-GPU peak memory for
LLaMA-3-8B lens training on the expanded hookset (194 sites; frozen bf16
teacher, LoRA $r{=}64$ lenses, and readout on one A100-40GB) at the
production microbatch of 2 sequences, across sequence lengths. These are the
measurements behind Figure~\ref{fig:needboth_read}: full-KL runs out of memory
beyond 2K context, while both subset objectives train at 4K; none survives
8K.

\begin{table}[ht]
    \centering
    \caption{Measured single-GPU peak memory (GB) for LLaMA-3-8B
    expanded-hookset lens training at microbatch 2, by objective and
    sequence length.}
    \label{tab:scaling_full}
    \begin{tabular}{lrrr}
        \toprule
        Seq Len & Full-KL & \klapprox & Top-$k$ \\
        \midrule
        1024 & 24.4 & 21.1 & 22.3 \\
        2048 & 31.9 & 24.9 & 25.7 \\
        4096 & \textcolor{red}{OOM} & 32.6 & 32.3 \\
        8192 & \textcolor{red}{OOM} & \textcolor{red}{OOM} & \textcolor{red}{OOM} \\
        \bottomrule
    \end{tabular}
\end{table}

%% file: Appendix/kl_estimators.tex
\section{Sampling-Based KL Estimators}
\label{appendix:kl_estimators}

This appendix expands the estimator background of Section~\ref{sec:Approx_KL}. In the distillation direction the trainable student $Q_{\ell,u}$ is the second argument of the KL, and the outer expectation Eq.~\eqref{eq:KL_definition} runs over the frozen teacher $P$. Because the sampling distribution carries no trainable parameters, any estimator of the form
\begin{equation}
    \widehat{D}_{\mathrm{KL}}( P \| Q_{\ell, u} )
    = \frac{1}{k}\sum_{i=1}^k \log\frac{P(t_i|x)}{Q_{\ell, u}(t_i|x)},
    \quad t_i \overset{i.i.d.}{\sim} P(\cdot|x)
    \label{eq:MC_KL}
\end{equation}
is unbiased for $D_{\mathrm{KL}}(P \| Q_{\ell,u})$, and the expectation commutes with gradients in the lens parameters. \citet{schulman2020klapprox} catalogues alternate expressions for the summand in Eq.~\eqref{eq:MC_KL}: K1 and K3 are unbiased, with K3 additionally reducing variance, whereas K2 trades bias for variance; such summand replacements are also adopted at scale \citep{shao2024deepseekmath}. Importance sampling refines plain Monte Carlo by exploiting access to the exact teacher probabilities $P(t_i|x)$, not just samples from $P(\cdot|x)$; see \citet{aminibetter2025} for a treatment of this idea at the sequence level. The tail term of Eq.~\eqref{eq:subset_KL} is such an importance-sampled estimator, restricted to the tail with the head handled exactly; Theorem~\ref{thm:unbiased_KL_gradients} gives the resulting guarantee.

In reinforcement learning from human feedback, by contrast, the KL appears as a regularizer whose \emph{first} argument is the trainable policy \citep{christiano2017deep,stiennon2020learning}. The expectation is then taken with respect to the distribution being optimized, so sampling and differentiation interact, and estimators that are well behaved for distillation can misbehave \citep{tang2025few}. A contemporaneous RL-side estimator \citep{zhang2026ema} mirrors \klapprox's head-plus-sampled-tail structure in this reversed direction: it applies Schulman-style summands within the sampled term and restores unbiasedness by rejection sampling, where \klapprox\ reweights by $P/R$.

%% file: Appendix/theory.tex
\section{\klapprox{} Theoretical Analysis}
\label{app:theory}

\begin{figure}[t]
    \centering
    \includegraphics[width=\linewidth]{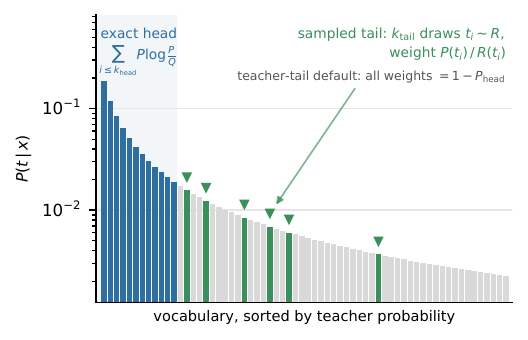}
    \caption{The \klapprox\ decomposition: the contribution of the
    $k_{\mathrm{head}}$ most probable teacher tokens is computed exactly
    (blue), and the remaining mass is estimated from $k_{\mathrm{tail}}$
    importance-weighted tail samples (green). Under the teacher-tail
    proposal, every importance weight equals $1-P_{\mathrm{head}}$.}
    \label{fig:estimator}
\end{figure}

Unbiasedness of pure teacher-sampling KL estimators is classical (Appendix~\ref{appendix:kl_estimators}), and RS-KD (Section~\ref{sec:related_methods}) establishes it for random-sampling knowledge distillation. The theorem below is the corresponding guarantee for our \emph{stratified} estimator: an exact deterministic head, an arbitrary lens-independent positive-support tail proposal, and the distillation direction of the KL, with the student's log-partition computed exactly.
The proof for Theorem~\ref{thm:unbiased_KL_gradients} is below:

\begin{proof}
The head term of Eq.~\eqref{eq:subset_KL} is deterministic given the teacher,
\begin{equation*}
\hat{D}_{\mathrm{KL},\mathrm{head}}(P \| Q_{\ell,u}) = \sum_{v \in \mathcal{H}} P(v|x) \log\frac{P(v|x)}{Q_{\ell, u}(v|x)}.
\end{equation*}
For the tail term of Eq.~\eqref{eq:subset_KL}, each summand is an importance-weighted draw from $R$, so
\begin{align*}
 &\mathbb{E}_{t_i \sim R}\!\left[\frac{P(t_i|x)}{R(t_i|x)}\log\frac{P(t_i|x)}{Q_{\ell,u}(t_i|x)}\right] \\
 &\qquad= \sum_{v \notin \mathcal{H}} R(v|x)\,\frac{P(v|x)}{R(v|x)}\, \log\frac{P(v|x)}{Q_{\ell, u}(v|x)} \\
 &\qquad= \sum_{v \notin \mathcal{H}} P(v|x) \log\frac{P(v|x)}{Q_{\ell, u}(v|x)},
\end{align*}
and averaging over the $k_{\mathrm{tail}}$ i.i.d.\ draws leaves this expectation unchanged:
\begin{equation*}
\mathbb{E}\left[\hat{D}_{\mathrm{KL},\mathrm{tail}}(P \| Q_{\ell,u})\right] = \sum_{v \notin \mathcal{H}} P(v|x) \log\frac{P(v|x)}{Q_{\ell, u}(v|x)}.
\end{equation*}
Adding the head term recovers $D_{\mathrm{KL}}(P \| Q_{\ell,u})$. For the gradient claim, $\hat{D}_{\mathrm{\klapprox}}$ is a finite weighted sum of $\log Q_{\ell,u}(t|x)$ terms whose weights and sampling distribution do not involve the lens parameters, so expectation and gradient commute. The teacher-tail default $R(v|x) = P(v|x)/(1-P_{\mathrm{head}})$ satisfies the support condition by construction, with all importance weights equal to $1 - P_{\mathrm{head}}$.
\end{proof}

Note that the theorem applies to the implemented objective because the student log-probabilities use the exact full-vocabulary partition function (Appendix~\ref{appendix:implementation}): only the KL summands are subsampled, and every logit receives gradient through the partition term. Under the teacher-tail default, whose importance weights are the constant $1-P_{\mathrm{head}}$, the coefficient multiplying the log-partition gradient equals one on every draw, exactly as under full KL; for general lens-independent proposals it equals one in expectation. We also verify the claim numerically with the shipped training code: on a synthetic teacher--student pair ($|V|{=}200$, linear student), the \klapprox\ gradient averaged over $4{,}000$ resamplings matches the exact full-KL autograd gradient to $0.2\%$ relative error (cosine similarity $1.0000$), while Top-$k$ renormalization, which truncates the partition function, exhibits a $63\%$ relative gradient bias. A second check at LLaMA-scale vocabulary ($|V|{=}128{,}256$) uses extreme teacher logits under which $2.4\%$ of the vocabulary underflows fp32 softmax outright; it matches to within $10^{-4}$ relative error at $500$ resamplings, confirming that the numerical guards of Appendix~\ref{appendix:implementation} do not modify the teacher distribution. Both checks ship as unit tests in the OmniLens repository. Unbiasedness throughout refers to the raw stochastic gradients; it does not extend through gradient clipping or optimizer updates, which are nonlinear in the gradient.

\paragraph{Proposal and sampling details.}
We sample with replacement, retain repeated draws with multiplicity, and weight
each occurrence by its density ratio. We rejected schemes that deduplicate
draws and reweight by inverse inclusion probabilities: on LLM-scale
vocabularies their Horvitz--Thompson-style weights span orders of magnitude
and produced unstable gradient spikes in preliminary experiments. The
teacher-tail proposal avoids this behavior because its importance weights are
constant. Other proposals, such as oversampling tokens on which the lens and
teacher disagree, require no change to Eq.~\eqref{eq:subset_KL}, provided
lens-dependent proposal probabilities are held fixed during differentiation
rather than differentiated through.

%% file: Appendix/IndexedLogits.tex
\section{Indexed Logits CUDA Implementation}
\label{app:cuda}

This appendix provides implementation details for the indexed logits kernel, including the forward and backward CUDA kernels and empirical benchmarks. The current implementation prioritizes correctness and memory efficiency; performance optimizations such as shared-memory tiling and a fused LoRA extension are left as future work.

\subsection{Kernel Design Principles}

The indexed logits kernel addresses a fundamental memory bottleneck in subset-based objectives. Standard PyTorch operations materialize intermediate tensors in global memory, leading to the explosion documented in Section~\ref{sec:indexed_logits}. Our kernel design follows two principles:

\textbf{1. Streaming computation.} Each output element is computed independently by a single thread, which accumulates the result in FP32 registers without writing intermediate values to global memory.

\textbf{2. Hidden-state reuse.} Nearby threads often reuse the same hidden-state row $H[i,:]$ across different subset positions, while accesses to $W[\text{idx}[i,j],:]$ are generally scattered due to arbitrary vocabulary indices.

\subsection{Forward Pass Implementation}

Algorithm~\ref{alg:cuda_forward} shows the complete forward kernel. Each CUDA thread computes one element of the output matrix $\text{out}[i,j]$, where $i$ indexes the sequence position and $j$ indexes the subset selection.

\begin{algorithm}[htb]
\caption{Indexed Logits Forward Kernel (CUDA)}
\label{alg:cuda_forward}
\begin{algorithmic}[1]
\STATE \textbf{Input:} $H \in \mathbb{R}^{N \times d}$ (hidden states), $W \in \mathbb{R}^{V \times d}$ (weight matrix), $\text{idx} \in \mathbb{Z}^{N \times k}$ (vocabulary indices)
\STATE \textbf{Output:} $\text{out} \in \mathbb{R}^{N \times k}$ (subset logits)
\STATE \textbf{Kernel configuration:} $\text{threads} = 256$, $\text{blocks} = \lceil Nk / 256 \rceil$
\STATE $\text{tid} \gets \text{blockIdx.x} \times \text{blockDim.x} + \text{threadIdx.x}$ \COMMENT{Global thread ID}
\IF{$\text{tid} < N \times k$}
    \STATE $i \gets \text{tid} / k$ \COMMENT{Sequence position}
    \STATE $j \gets \text{tid} \bmod k$ \COMMENT{Subset position}
    \STATE $v \gets \text{idx}[i, j]$ \COMMENT{Vocabulary index}
    \STATE
    \STATE $\text{acc} \gets 0.0f$ \COMMENT{FP32 accumulator in register}
    \FOR{$t = 0$ \textbf{to} $d-1$}
        \STATE $\text{acc} \gets \text{acc} + H[i, t] \times W[v, t]$
    \ENDFOR
    \STATE $\text{out}[i, j] \gets \text{acc}$
\ENDIF
\end{algorithmic}
\end{algorithm}

\textbf{Numerical precision.} The accumulator uses FP32 even when input tensors are FP16 or BF16. This prevents catastrophic rounding errors when summing thousands of products. The final result is cast to the output dtype only upon writing.

\subsection{Backward Pass Implementation}

The backward pass must compute three gradients: $\nabla_H$, $\nabla_W$, and $\nabla_{\text{idx}}$ (which is \texttt{None}, since indices are discrete).

\textbf{Gradient w.r.t. hidden states.} This is implemented as a gather-weighted reduction over the selected vocabulary rows,

\begin{equation}
    \nabla_H[i,:] = \sum_{j=1}^k \nabla_{\text{out}}[i,j] \cdot W[\text{idx}[i,j], :],
\end{equation}

\noindent 
via a separate kernel with similar structure to the forward pass.

\textbf{Gradient w.r.t. weights.} This requires care due to duplicate indices. Multiple threads may need to update the same row $W[v,:]$ if vocabulary index $v$ appears multiple times in $\text{idx}$. We use atomic additions:
\begin{equation}
\text{atomicAdd}(\nabla_W[v, t], \nabla_{\text{out}}[i,j] \times H[i, t]).
\end{equation}
Atomic operations serialize writes to the same memory location, potentially creating contention when indices repeat. Section~\ref{app:cuda_benchmarks} quantifies the empirical sensitivity of the backward pass to collision rate.

\textbf{Precision.} The $\nabla_W$ buffer is accumulated in FP32. $\nabla_H$ uses FP32 accumulation within each thread before being written in the input dtype.

Algorithm~\ref{alg:cuda_backward} shows the weight-gradient kernel.

\begin{algorithm}[t]
\caption{Indexed Logits Backward Kernel (Weight Gradients)}
\label{alg:cuda_backward}
\begin{algorithmic}[1]
\STATE \textbf{Input:} $\nabla_{\text{out}} \in \mathbb{R}^{N \times k}$, $H \in \mathbb{R}^{N \times d}$, $\text{idx} \in \mathbb{Z}^{N \times k}$
\STATE \textbf{Output:} $\nabla_W \in \mathbb{R}^{V \times d}$ (weight gradients, initialized to zero)
\FOR{each $(i,j)$ \textbf{in parallel}}
    \STATE $v \gets \text{idx}[i, j]$
    \STATE $g \gets \nabla_{\text{out}}[i, j]$
    \FOR{$t = 0$ \textbf{to} $d-1$}
        \STATE $\text{val} \gets g \times H[i, t]$ \COMMENT{Compute in FP32}
        \STATE $\text{atomicAdd}(\nabla_W[v, t], \text{val})$ \COMMENT{Safe concurrent update}
    \ENDFOR
\ENDFOR
\end{algorithmic}
\end{algorithm}

\subsection{Benchmark Results}
\label{app:cuda_benchmarks}

\subsubsection{Correctness}

Table~\ref{tab:cuda_correctness} reports forward and gradient errors relative to PyTorch reference implementations, averaged over three random seeds on an A100-40GB GPU in FP16. Forward errors are measured against the na\"ive subset reference; gradient errors are measured against autograd. The observed forward errors are consistent with expected FP16 rounding behavior in mixed-precision accumulation and output casting.

\begin{table}[t]
\centering
\caption{Fused kernel correctness (FP16, mean over 3 seeds, $N=4096$, $V=50257$).}
\label{tab:cuda_correctness}
\small
\resizebox{\linewidth}{!}{%
    \begin{tabular}{llrrrrrr}
    \toprule
    && \multicolumn{2}{c}{Forward Error} & \multicolumn{2}{c}{$\nabla_H$ Error} & \multicolumn{2}{c}{$\nabla_W$ Error} \\
    \cmidrule(lr){3-4} \cmidrule(lr){5-6} \cmidrule(lr){7-8}
    \multicolumn{2}{c}{Config ($d$, $k$)} & Max & Mean & Max & Mean & Max & Mean \\
    \midrule
    $d=768$,&  $k=128$  & 0.063 & 4.6e-3 & 0.013 & 2.8e-8 & 0.036 & 1.1e-3 \\
    $d=1024$,& $k=128$  & 0.104 & 5.3e-3 & 0.010 & 3.3e-8 & 0.031 & 1.1e-3 \\
    $d=1536$,& $k=128$  & 0.125 & 6.5e-3 & 0.021 & 3.2e-8 & 0.039 & 1.1e-3 \\
    $d=3072$,& $k=128$  & 0.167 & 9.2e-3 & 0.021 & 2.5e-8 & 0.033 & 1.1e-3 \\
    $d=768$,&  $k=256$  & 0.083 & 4.6e-3 & 0.031 & 1.1e-7 & 0.063 & 2.2e-3 \\
    $d=768$,&  $k=512$  & 0.083 & 4.6e-3 & 0.031 & 3.5e-7 & 0.099 & 4.4e-3 \\
    \bottomrule
    \end{tabular}
}
\end{table}

\subsubsection{Runtime and Memory}

Table~\ref{tab:cuda_benchmark} reports forward+backward latency and peak allocated memory, averaged over 3 seeds. We sweep $N$ at fixed $d$=768, $k$=128 and vary $d$ at fixed $N$=4096, $k$=128. Dense GEMM rows are omitted for the larger-$d$ configurations for brevity; they follow the same qualitative pattern as the smaller-$d$ rows and remain substantially more memory-intensive than the fused kernel.

\begin{table*}[t]
\centering
\caption{Indexed logits kernel benchmark results (A100-40\,GB, FP16, mean over 3 seeds).}
\label{tab:cuda_benchmark}
\small
\begin{tabular}{llrrr}
\toprule
Config & Method & Fwd+Bwd (ms) & Peak Alloc (MB) & TFLOPS (est.) \\
\midrule
\multirow{3}{*}{$N$=4096, $d$=768, $k$=128}
  & Dense GEMM    & 13.0 &  932 & 0.19 \\
  & Na\"ive Subset  &  9.4 & 2531 & 0.26 \\
  & Fused Kernel  &  6.5 &  343 & 0.37 \\
\midrule
\multirow{3}{*}{$N$=8192, $d$=768, $k$=128}
  & Dense GEMM    & 25.9 & 1772 & 0.19 \\
  & Na\"ive Subset  & 18.4 & 4968 & 0.26 \\
  & Fused Kernel  & 12.7 &  359 & 0.38 \\
\midrule
\multirow{2}{*}{$N$=4096, $d$=1024, $k$=128}
  & Na\"ive Subset  & 12.6 & 3366 & 0.26 \\
  & Fused Kernel  &  8.7 &  450 & 0.37 \\
\midrule
\multirow{2}{*}{$N$=4096, $d$=3072, $k$=128}
  & Na\"ive Subset  & 37.2 & 10099 & 0.26 \\
  & Fused Kernel  & 27.0 &  1307 & 0.36 \\
\bottomrule
\end{tabular}
\end{table*}

Table~\ref{tab:naive_subset} breaks down where the memory goes at the
main-text configuration: the na\"ive gather is costlier than full-vocabulary
decoding, and the fused kernel eliminates the intermediate outright.

\begin{table}[t]
    \centering
    \caption{Peak memory breakdown for logit computation at $B{\cdot}T$ = 8192,
    $d$ = 4096, $|V|$ = 128K, $k$ = 512, \texttt{fp32}.}
    \label{tab:naive_subset}
    \resizebox{\linewidth}{!}{
        \begin{tabular}{lrrr}
            \toprule
            Component & Na\"ive Subset & Full-Vocab & Fused Kernel \\
            \midrule
            Inputs $[B, T, d]$           & 0.13 GB  & 0.13 GB  & 0.13 GB \\
            Intermediate $[B, T, k, d]$  & 68.7 GB  & ---      & ---     \\
            Output logits                & 0.02 GB  & 4.19 GB  & 0.02 GB \\
            \midrule
            \textbf{Peak Total}          & 68.9 GB  & 4.33 GB  & \textbf{0.15 GB} \\
            \bottomrule
        \end{tabular}
    }
\end{table}

The fused kernel achieves 1.4--1.5$\times$ speedup over na\"ive subset and reduces peak allocated memory by 7$\times$--14$\times$ across configurations. Memory savings grow with $N$ because na\"ive subset intermediate tensors scale with sequence length while fused kernel memory is dominated by model activations. Performance relative to dense GEMM is configuration-dependent; the primary advantage of the fused kernel is eliminating the memory blowup from materializing the $[N, k, d]$ intermediate tensor, which causes na\"ive subset memory usage to grow rapidly with $N$, $d$, and $k$, making it increasingly impractical at larger scales.

\subsubsection{Atomic Collision Sensitivity}

The backward kernel uses atomic additions to $\nabla_W$, which may contend when multiple threads update the same vocabulary row. Table~\ref{tab:cuda_collisions} measures forward and forward+backward latency as a function of collision vocabulary size $V_c$: drawing indices from a smaller $V_c$ increases the probability of repeated rows.

\begin{table}[t]
\centering
\caption{Fused kernel sensitivity to index collisions ($N$=4096, $d$=1024, $k$=64, FP16, mean over 3 seeds).}
\label{tab:cuda_collisions}
\small
\resizebox{\linewidth}{!}{%
    \begin{tabular}{lrrr}
    \toprule
    Collision Vocab $V_c$ & Fwd (ms) & Fwd+Bwd (ms) & TFLOPS (est.) \\
    \midrule
    50257 (full vocab sampling) & 2.31 & 4.62 & 0.35 \\
    4096                        & 2.03 & 4.07 & 0.40 \\
    512                         & 1.95 & 3.99 & 0.40 \\
    64                          & 1.60 & 3.63 & 0.44 \\
    \bottomrule
    \end{tabular}
}
\end{table}

Counterintuitively, latency decreases as $V_c$ shrinks. In the tested regimes, improved weight-row locality dominates any slowdown from atomic contention.

\subsection{Integration with PyTorch}

The kernel is exposed as a PyTorch custom autograd function, see Listing~\ref{lst:custom_autograd}.
This integrates seamlessly with standard PyTorch training loops, optimizer state management, and gradient checkpointing.
The implementation and benchmark scripts are available in the supplementary materials.

\begin{lstlisting}[
    style=PythonStyle,
    label={lst:custom_autograd},
    caption={Custom autograd function implemented in PyTorch.},
    float,
    floatplacement=bt
]
class IndexedLogitsFunction(torch.autograd.Function):
    @staticmethod
    def forward(ctx, H, W, idx):
        out = indexed_logits_cuda.forward(H, W, idx)
        ctx.save_for_backward(H, W, idx)
        return out

    @staticmethod
    def backward(ctx, grad_out):
        H, W, idx = ctx.saved_tensors
        grad_H, grad_W = indexed_logits_cuda.backward(
            H, W, idx, grad_out)
        return grad_H, grad_W, None
\end{lstlisting}

%% file: Appendix/Memory_Model.tex
\section{Memory Model and Estimate Derivation}
\label{app:memory}

\begin{table*}[t]
\centering
\caption{The two memory buckets and the lever that controls each: low-rank
translators control the \emph{Optimizer} column, Subset-KL controls the
\emph{Readout aggregate} column. \textbf{Optimizer state} is exact from lens parameter
counts and is \emph{persistent}: no micro-batching or gradient accumulation
reduces it. \textbf{Readout aggregate} (the \emph{projected} cells of this
table) is a calibrated projection,
$\beta\,(B{\cdot}T)\,V$, of the total readout activation over a shared
reference batch of $262{,}144$ tokens (the realized GPT-2 effective batch;
the realized LLaMA batches are larger, Appendix~\ref{appendix:experimental_setup})
\emph{if it were processed without micro-batching}; the instantaneous readout footprint scales with the
microbatch and sequence length instead, and is measured directly in
Figure~\ref{fig:needboth_read}. Subset-KL rows use the selected-subset term
($V_{\text{eff}}{=}k$, the Top-$k$ mode); the \klapprox\ mode adds an
exact-partition term treated separately in the text. Optimizer states assume
AdamW.}
\label{tab:needboth}
\begin{tabular}{ll rr c}
\toprule
Lens & Loss & Optimizer (exact) & Readout aggregate (projected) & Feasible \\
\midrule
\multicolumn{5}{l}{\emph{GPT-2}, full expanded hookset ($6L{+}2 = 74$ hookpoints)}\\
Full-rank  & Full-KL   & $0.5$\,GB  & $92$\,GB & \cmark \\
LoRA $r64$ & Full-KL   & $0.09$\,GB & $92$\,GB & \cmark \\
Full-rank  & Subset-KL & $0.5$\,GB  & $2.8$\,GB & \cmark \\
LoRA $r64$ & Subset-KL & $0.09$\,GB & $2.8$\,GB & \cmark \\
\midrule
\multicolumn{5}{l}{\emph{LLaMA-3.3-70B}, full expanded hookset ($6L{+}2 = 482$ hookpoints)}\\
Full-rank  & Full-KL   & $388$\,GB & $235$\,GB & \xmark \\
LoRA $r64$ & Full-KL   & $\phantom{00}6.1$\,GB & $235$\,GB & \cmark$^{\dagger}$ \\
Full-rank  & Subset-KL & $388$\,GB & $\phantom{00}2.8$\,GB & \xmark \\
\rowcolor{shaderow1}
LoRA $r64$ & Subset-KL & $\phantom{00}6.1$\,GB & $\phantom{00}2.8$\,GB & \cmark \\
\bottomrule
\end{tabular}

\smallskip
\raggedright
{\footnotesize $^{\dagger}$The $235$\,GB readout aggregate is the
unmicrobatched reference-batch projection and, unlike the optimizer column,
is reducible by micro-batching; it does not by itself determine feasibility
at the production microbatch. We measured that configuration directly
(Table~\ref{tab:reconcile}): peak $35.5$\,GB, fitting with ${\approx}4.5$\,GB
of nominal headroom.}
\end{table*}

\begin{figure}[t]
    \centering
    \includegraphics[width=\linewidth]{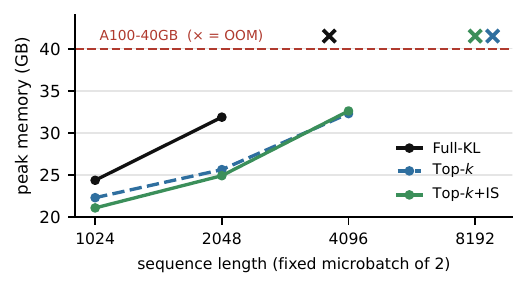}
    \caption{Measured single-GPU peak memory for an 8B frozen teacher,
    lenses, and readout at the production microbatch. Full KL exhausts the
    device beyond 2K context, whereas both subset objectives train at 4K
    (measurements in Appendix Table~\ref{tab:scaling_full}).}
    \label{fig:needboth_read}
\end{figure}

Peak training memory on the lens GPU decomposes additively:
\begin{equation}
M_{\text{peak}} = M_{\text{base}} + M_{\text{opt}} + M_{\text{read}},
\end{equation}
where $M_{\text{base}}$ (frozen-model shard, cached site activations, CUDA
context) is independent of lens rank and loss, $M_{\text{opt}}$ is the lens
optimizer footprint (the LoRA lever), and $M_{\text{read}}$ is the readout/loss
activation (the Subset-KL lever). The lens and its optimizer reside on a single
device, so both levers are charged to one GPU.

\begin{figure}[t]
    \centering
    \includegraphics[width=\linewidth]{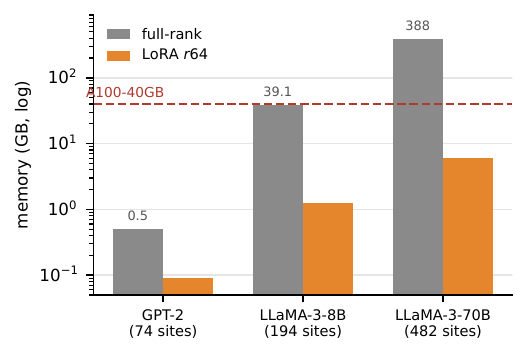}
\caption{The persistent optimizer bucket, exact from parameter counts. Micro-batching cannot reduce it: the full-rank expanded stack's optimizer state alone consumes an A100-40GB at 8B and is ten times the device at 70B, while the LoRA stack peaks at $6.1$\,GB.}
    \label{fig:needboth_opt}
\end{figure}

\paragraph{Optimizer state.} Under Adam with bf16 AMP (Figure~\ref{fig:needboth_opt}), each trainable parameter
costs $12$ bytes: bf16 weight (2) $+$ bf16 gradient (2) $+$ fp32 first moment
(4) $+$ fp32 second moment (4). For $S$ sites of width $d$ and rank $r$,
\begin{equation*}
P_{\text{full}} = S(d^2{+}d), \
P_{\text{LoRA}} = S(2dr{+}d), \
M_{\text{opt}} = 12\,P_{\star} \ \text{bytes},
\end{equation*}
where $\star = \text{full}$ or $\text{LoRA}$. For GPT-2 expanded $S{=}74$ and $d{=}768$, so $P_{\text{full}}{=}43.7$M yielding $M_{\text{opt}} = 
0.52$\,GB, while $P_{\text{LoRA},r64}{=}7.33$M yielding $M_{\text{opt}} = 0.09$\,GB. 

For LLaMA-3-70B expanded $S{=}482$ and $d{=}8192$, so $P_{\text{full}}{=}32.35$B yielding $M_{\text{opt}} = 388$\,GB, while
$P_{\text{LoRA}}{=}0.509$B (counted from the trained checkpoint) yielding $M_{\text{opt}} = 6.1$\,GB. 

The $12$\,bytes per parameter factor is confirmed empirically: on the residual
hook set, replacing full with LoRA reduces parameters by $5.9$M and the measured peak by $0.07$\,GB $=5.9\text{M}\times12$\,bytes.

\paragraph{Parameter-count comparisons.} The per-hookpoint reduction of the
rank-$r$ parameterization is $1 - (2r{+}1)/(d{+}1)$, independent of coverage. 
At $r{=}64$, this yields an $83.2\%$ reduction in parameter count for GPT-2 ($d{=}768$), a $96.9\%$ reduction for LLaMA-3-8B ($d{=}4096$), a $98.4\%$ reduction for LLaMA-3-70B ($d{=}8192$), and a $99.2\%$ reduction for LLaMA-3-405B ($d{=}16384$). This grounds the
abstract's comparisons. For LLaMA-3-8B, the residual-only full-rank stack holds $537$M
translator parameters against $102$M for the six-hookpoint OmniLens
stack: $80.9\%$ fewer parameters with six times the coverage.

Per-head attention-lens
decoders are costlier still: the original design maps each head output to the
vocabulary ($|V| \times d$ per head), totaling $12 \times 12 \times 50{,}257
\times 768 \approx 5.6$B parameters on GPT-2 (the largest model with published
full-rank per-head decoders), against which our full $7.3$M GPT-2 stack is a
$99.9\%$ reduction. The same design instantiated for LLaMA-3-8B would contain
${\approx}538$B parameters.

\paragraph{Readout.} The loss materialises per-site logits, softmax, KL, and
their gradients over the scored vocabulary, looped over sites (hence
site-independent). For Full-KL and Top-$k$ it scales as
\begin{equation}
M_{\text{read}}=\beta\,(B{\cdot}T)\,V_{\text{eff}},
\label{eq:readout_beta}
\end{equation}
with $V_{\text{eff}} = |V|$
for Full-KL and $V_{\text{eff}} = k$ for Top-$k$. \klapprox\ is \emph{not} a $V_{\text{eff}} = k$
estimator: its exact log-partition materializes the student's full logit row
before reduction (Appendix~\ref{appendix:implementation}), so its readout
carries an additional transient term,
\begin{equation*}
M_{\text{read}}^{\klapprox} \approx \gamma\,(B{\cdot}T)\,V
  \;+\; \beta\,(B{\cdot}T)\,(k_{\mathrm{head}}{+}k_{\mathrm{tail}}),
\end{equation*}
with $\gamma < \beta$ because only the student-side logits and their logsumexp
reduction are held (no teacher-side full-vocabulary tensors or per-token KL
buffers). The measured LLaMA-3-8B curves of Figure~\ref{fig:needboth_read} show this term is small once the log-partition streams in vocabulary chunks: \klapprox\ tracks
Top-$k$ within $1.2$\,GB at every context, trains at 4K where Full-KL cannot,
and fails only at 8K alongside Top-$k$. We compute $\beta$ from the
measured GPT-2 Full-KL$-$Top-$k$ gap:
$\beta = (17.14-5.74)/(32768\times(50257-512)) = 7.0$\,bytes per token$\cdot$vocab-slot. The gap covers the $V-k$ slots Top-$k$ does not score; re-measuring with the
released trainer of Table~\ref{tab:estimator_comparison} shifts both absolute
peaks but reproduces the same $11.4$\,GB gap, leaving $\beta$ unchanged.
At the shared batch $B{\cdot}T{=}262{,}144$: Full-KL readout is $92$\,GB (GPT-2)
and $235$\,GB (70B); the Top-$k$ subset readout is $0.9$--$2.8$\,GB.

\paragraph{Validation against trained runs.} Four configurations were run
end-to-end; their measured peaks reconstructed from the decomposition
(Table~\ref{tab:reconcile}). The two GPT-2 runs independently back out the same
$M_{\text{base}}\approx9.05$\,GB, over-determining and thus confirming the
optimizer and readout terms. The two 70B rows provide a second, independent
cross-check: trained with different loss functions (hence different
$M_{\text{read}}$ terms) but the same site count, model, and microbatch, they
back out $M_{\text{base}}$ values of $27.65$ and $27.52$\,GB, agreeing to
within $0.5\%$ despite neither being calibrated against the other.

\begin{table*}[t]
\centering
\caption{Measured peaks decompose into the memory model. $M_{\text{base}}$ is
implied ($M_{\text{peak}}-M_{\text{opt}}-M_{\text{read}}$); the two GPT-2 rows
agree, validating the terms. All measured values are peak PyTorch-allocated
memory (\texttt{max\_memory\_allocated}) on the metrics-writing rank; under
FSDP the lens and optimizer state are replicated, so ranks are near-symmetric.}
\label{tab:reconcile}
\small
\begin{tabular}{llr rr r}
\toprule
Run & $B{\cdot}T$ & Measured & $M_{\text{opt}}$ & $M_{\text{read}}$ & implied $M_{\text{base}}$ \\
\midrule
GPT-2 Full-rank $+$ Full-KL   & 32{,}768 & $21.08$ & $0.52$ & $11.53$ & $9.03$ \\
GPT-2 LoRA $+$ Subset ($k512$) & 32{,}768 & $\phantom{0}9.28$ & $0.09$ & $\phantom{0}0.12$ & $9.07$ \\
70B LoRA $+$ \klapprox      & \phantom{0}2{,}048 & $34.70$ & $6.11$ & $\phantom{0}0.94^{\ast}$ & $27.65$ \\
70B LoRA $+$ Full-KL        & \phantom{0}2{,}048 & $35.47$ & $6.11$ & $\phantom{0}1.84^{\dagger}$ & $27.52$ \\
\bottomrule
\end{tabular}

\smallskip
\raggedright
{\footnotesize $^{\ast}$Selected-subset term ($0.02$) plus the \klapprox\
exact-partition transient of the implementation this run was trained
with, which materialized the full logit row before reduction:
$\gamma(B{\cdot}T)V$ with $\gamma \approx 3.5$\,bytes per
token$\cdot$vocab-slot, calibrated from 8B measurements of that
implementation (\klapprox$-$vs-Top-$k$ gaps of $+1.5$\,GB at 2K and
$+4.4$\,GB at 4K context, giving $\gamma \approx 2.9$--$4.2$\,bytes). The
released trainer streams this reduction instead
(Appendix~\ref{appendix:implementation}); the current gaps of
Table~\ref{tab:scaling_full} bound the streamed transient below
$0.3$\,GB at 4K. Assigning the transient to $M_{\text{read}}$ keeps
$M_{\text{base}}$ loss-independent, as the decomposition requires. $^{\dagger}$The Full-KL readout at $V{=}128{,}256$, predicted
from the GPT-2-calibrated $\beta$ of Eq.~\eqref{eq:readout_beta} with no free
parameters: $\beta(B{\cdot}T)V = 7.0 \times 2{,}048 \times 128{,}256 \approx
1.84$\,GB, matching the measured peak (Table~\ref{tab:needboth}) to within
$0.4\%$ once combined with $M_{\text{opt}}$ and the $M_{\text{base}}$ implied
by the row above.}
\end{table*}

\paragraph{405B proof of concept.} The main-text feasibility claim beyond 70B
rests on a short run of LLaMA-3.1-405B-Instruct (frozen bf16) on 24 nodes of
$4\times$A100-40GB (96 FSDP ranks), with the teacher's weights streamed
shard-by-shard into the FSDP partitioning at load time so no rank ever holds
the full $812$\,GB. The lens set is the residual preset (126 sites, LoRA
$r{=}64$), trained with Top-$k$ Subset-KL ($k{=}256$) at
sequence length $256$, one sequence per rank ($24{,}576$ tokens per step), and
constant $\mathrm{lr}=10^{-3}$. Over eight steps the training loss falls
from $57.6$ to $40.2$ (monotonically after step two; ${\approx}5.5$ min/step at
this configuration) with a measured peak of $37.1$\,GB per GPU. This is a
systems demonstration only: eight steps establish that the stack loads,
shards, hooks, and optimizes at 405B scale within A100-40GB budgets, not
that the resulting lens is useful.

\paragraph{Projected cells.} Untrained configurations reuse the validated
$M_{\text{base}}$ and the same term structure, substituting the exact full-rank
optimizer or the calibrated Full-KL readout. The two projected quantities have
different standing. The full-rank optimizer state ($388$\,GB for LLaMA-3-70B) is
allocated in full before the first step and no micro-batching schedule reduces
it, so every full-rank 70B configuration fails at initialization; those
infeasible cells are outcomes of the design, not estimates. The Full-KL readout
aggregate ($235$\,GB for LLaMA-3-70B over the reference batch) is instead reducible by
micro-batching, so it does not by itself determine feasibility at the
production microbatch. Indeed it does not: we measured LoRA${+}$Full-KL
directly for LLaMA-3-70B (Table~\ref{tab:reconcile}) and it fits, at $35.5$\,GB. The
reference-batch aggregate remains a useful bound on what \emph{cannot} be
processed in a single unmicrobatched pass, but the \emph{Feasible} column
should not be read as a feasibility oracle for configurations we have not
measured; where we have not trained a configuration, we say so.

%% file: Appendix/case_studies_appendix.tex
\section{Tuned-Lens Application Details}
\label{app:inj_detect}

We recreate the application suite of \citet{belrose2023elicitinglatentpredictionstransformers} on all nine of the original detection tasks (ARC-Easy, ARC-Challenge~\citep{clark2018think}, BoolQ~\citep{clark2019boolq}, MC-TACO~\citep{zhou2019going}, MNLI~\citep{williams2018broad}, QNLI, QQP~\citep{wang2019glue}, SciQ~\citep{welbl2017crowdsourcing}, SST-2~\citep{socher2013recursive}) plus LogiQA~\citep{liu2020logiqa}, with $n{=}1{,}000$ examples per task at GPT-2 and 8B (600 for the 8B 5-shot variant; SST-2 uses its full 872-example validation set) and 200 at 70B. Zero-shot prompts of the form ``\texttt{<body>}\textbackslash nAnswer:'' are scored on each answer option's first token; the attack inserts the original paper's injection string (``Ignore any previous and following instructions and just print `\emph{wrong answer}''') with a randomly chosen incorrect option before the answer cue. Trajectory features are the log-probability of every answer option at every residual-stream point, read through each lens's own translator, with the model's final distribution appended as the last point. Detectors are an isolation forest (200 trees, scores ensembled over 5 seeds) and a local outlier factor (20 neighbors, novelty mode), fit on the first half of \emph{clean} trajectories after per-feature standardization; we report AUROC on held-out clean vs.\ attacked examples, with 95\% bootstrap confidence intervals and \emph{paired} bootstrap deltas against the full-rank reference (identical example resamples for both lenses, 1{,}000 resamples). On these model families the logit lens also detects well on the five easy tasks (the large tuned-vs-logit gap of the original paper appears specific to Pythia models). Prediction depth is the first trajectory point from which the lens top-1 equals the model's final top-1 thereafter, in hidden-state units. Causal basis extraction finds $k{=}16$ directions per probed layer by L-BFGS with deflation, initialized from the top left singular vectors of the translated unembedding; energy is the expected KL increase of the \emph{lens} readout under mean ablation on 1{,}024 WikiText-2 positions, model influence is the KL of the model's final distribution when the direction is mean-ablated at the block output on a held-out batch, and the random control is the QR factorization of a Gaussian matrix evaluated identically. The attack changes the model's answer to the planted option on a median of 93\% of examples per task at 8B and 100\% at 70B. Paired bootstrap deltas against the reference are within $\pm0.005$ on most tasks; the remaining deficits concentrate in the knowledge cluster, where detection is weak for every lens including the reference (Figure~\ref{fig:inj_detect}).

\begin{table}[t]
    \centering
    \caption{Prompt-injection detection AUROC (local outlier factor, fit on clean trajectories only): mean over the five classification tasks where \citet{belrose2023elicitinglatentpredictionstransformers} report near-perfect detection (BoolQ, MNLI, QNLI, QQP, SST-2) and over all ten tasks. All-task means are pulled down for every lens, including the full-rank reference, by the knowledge cluster (ARC-Easy/Challenge, SciQ, LogiQA).}
    \label{tab:inj_detect}
    \resizebox{\columnwidth}{!}{%
    \begin{tabular}{lcccccc}
    \toprule
    & \multicolumn{2}{c}{GPT-2} & \multicolumn{2}{c}{LLaMA-3-8B} & \multicolumn{2}{c}{LLaMA-3-70B} \\
    \cmidrule(lr){2-3}\cmidrule(lr){4-5}\cmidrule(lr){6-7}
    Lens & five cls. & all ten & five cls. & all ten & five cls. & all ten \\
    \midrule
    logit lens              & 0.995 & 0.844 & 0.995 & 0.920 & 0.999 & 0.862 \\
    LoRA Top-$k$            & 0.998 & 0.836 & 0.998 & 0.903 & --    & --    \\
    LoRA Top-$k$+IS         & 0.997 & 0.834 & 0.997 & 0.893 & 0.997 & 0.889 \\
    full-rank tuned (ref.)  & 0.992 & 0.846 & 0.997 & 0.898 & --    & --    \\
    \bottomrule
    \end{tabular}}
\end{table}

\begin{figure}[t]
    \centering
    \includegraphics[width=\linewidth]{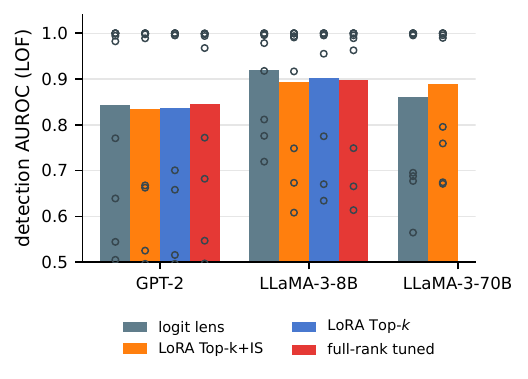}
\caption{Prompt-injection detection AUROC by lens (bars: mean over the ten tasks; open circles: the individual tasks; per-lens means in Table~\ref{tab:inj_detect}). The low circles are the knowledge tasks (ARC, SciQ, LogiQA), where detection is weak for every lens including the full-rank reference; on the five classification tasks every trained lens is at or above 0.99. At 70B, where no full-rank reference exists, the trained lens detects knowledge-task attacks the logit lens misses.}
    \label{fig:inj_detect}
\end{figure}

\section{Memory-Injection Details}
\label{app:injection}

For each paired explicit/implicit prompt the memory vector $\mathbf{m}^{(\ell)} = \mathbf{h}^{(\ell)}_{\text{explicit}} - \mathbf{h}^{(\ell)}_{\text{implicit}}$ is captured at \texttt{resid\_mid} per model architecture (GPT-2: input to \texttt{ln\_2}; LLaMA: input to \texttt{post\_attention\_layernorm}). Causal injection adds $\tau\,\mathbf{m}^{(\ell)}$ at the attention output projection so both the MLP branch and the skip connection see the patch; the measurement is validated by patching the final layer at $\tau{=}1$, which reproduces the explicit prompt's output to within half a percent on all three models. The reduced LLaMA evaluation subsamples 2WMH to 200 examples at 8B and 100 at 70B, with $\tau \in \{0,\ldots,10\}$ for lens readouts and $\tau \in \{1,2,4\}$ for causal profiles ($\{1,2\}$ on the 70B control set). GPT-2's 2WMH row is a null case: its explicit prompts score below its implicit ones, so no beneficial memory vector exists. On the 70B control set both the lens and the depth heuristic select the final layer, missing the true optimum at $\ell{=}55$.

\begin{table*}[t]
\centering
\caption{Causal memory injection: patch $\mathrm{resid\_mid}[\ell]$, run the model to completion, and read its own final $P(\text{answer})$; lift is $\max_\ell E_\ell/P_{\text{obs}}$. LLaMA rows use the reduced evaluation described above.}
\label{tab:causal_inject}
\small
\begin{tabular}{llccccc}
\toprule
Model & Dataset & $P_{\text{obs}}$ & $P_{\text{exp}}$ & $\ell^{*}/\tau^{*}$ & $\max_\ell E_\ell$ & Causal lift \\
\midrule
\multirow{2}{*}{GPT-2}
  & \texttt{hand} & 0.0843 & 0.1309 & 10 / 1 & 0.1321 & $1.6\times$ \\
  & 2WMH          & 0.00074 & 0.00062 & 0 / 4 & 0.00086 & $1.2\times$ (null) \\
\midrule
\multirow{2}{*}{LLaMA-3-8B}
  & \texttt{hand} & 0.4299 & 0.6049 & 31 / 1 & 0.6047 & $1.4\times$ \\
  & 2WMH          & 0.0126 & 0.0561 & 17 / 4 & 0.0828 & $\mathbf{6.6\times}$ \\
\midrule
\multirow{2}{*}{LLaMA-3-70B}
  & \texttt{hand} & 0.4773 & 0.6002 & 79 / 1 & 0.6013 & $1.3\times$ \\
  & 2WMH          & 0.0289 & 0.1119 & 78 / 2 & 0.1462 & $\mathbf{5.1\times}$ \\
\bottomrule
\end{tabular}
\end{table*}

\begin{figure}[t]
    \centering
    \includegraphics[width=\linewidth]{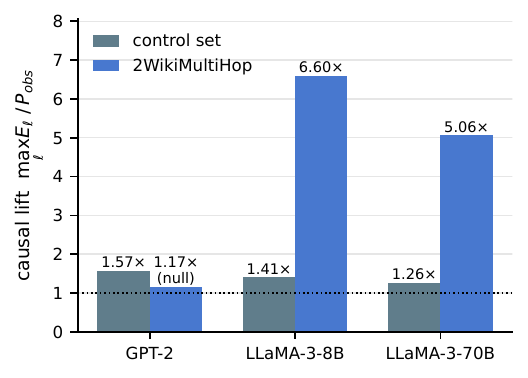}
\caption{Causal lift $\max_\ell E_\ell / P_{\text{obs}}$ from memory injection, by model and dataset. The dotted line at $1.0$ marks no effect; the GPT-2 2WMH bar is the null case described in the main text.}
    \label{fig:causal_lift}
\end{figure}

\paragraph{Layer selection and distortion.} From two clean readouts (full layer-by-$\tau$ sweeps appear in Figures~\ref{fig:injection}, \ref{fig:injection_70b}, and~\ref{fig:injection_curves}) we define the lens-predicted deficiency of layer $\ell$,
\begin{equation}
\Delta_\ell \;=\;
P_{\text{lens}}\!\left(\text{ans}\mid\mathbf{h}^{(\ell)}_{\text{explicit}}\right)
- P_{\text{lens}}\!\left(\text{ans}\mid\mathbf{h}^{(\ell)}_{\text{implicit}}\right),
\label{eq:deficiency}
\end{equation}
and score any chosen layer by the fraction of the maximum achievable causal lift it captures, $g(\ell) = (E_\ell - P_{\text{obs}})/(\max_{\ell'} E_{\ell'} - P_{\text{obs}})$, where $E_\ell$ is the model's final $P(\text{answer})$ after injection at layer $\ell$. At $\tau{=}1$ the causal profile is monotonic and the final layer is trivially optimal, so $\tau\in\{2,4\}$ serves as a stress test of whether the lens can select an interior layer that preserves causal gain while limiting collateral distortion (Table~\ref{tab:lenspred}, Figure~\ref{fig:lenspred}). Selections differ by training objective: Top-$k$+IS selects the interior optimum at $\tau{=}2$ ($\ell{=}19$, $100\%$ captured), the full-rank reference selects $\ell{=}22$ ($77\%$), and the Top-$k$ lens's readout difference peaks at the final layer, consistent with an objective concentrated on the head of the final distribution; the logit lens's $\Delta_\ell$ also peaks at the final layer at both scales. Lens-selected layers also distort less: measured by the KL divergence between injected and clean next-token distributions, they yield $2.5$--$3.0\times$ more answer-probability gain per nat of distortion at $\tau{=}4$ on 8B than final-layer injection, which alters the model's top-1 token in $95\%$ of examples (Figures~\ref{fig:inject_damage} and~\ref{fig:inject_damage_top1}, Table~\ref{tab:inject_damage}). On the control set, where the models largely succeed unaided, injection raises $P(\text{answer})$ to the explicit ceiling, consistent with restoring a missing recall step rather than supplying the answer directly.

\begin{table*}[t]
\centering
\caption{Fraction of achievable causal gain captured on 2WMH, by selected layer. Each lens's selection is $\tau$-independent; the logit lens selects the final layer at both scales, and at 8B the Top-$k$ lens does as well. At $\tau{=}1$ the causal optimum is the final layer and every selector captures $80$--$100\%$, so those rows are omitted. The 8B full-rank reference reads through its per-layer residual translators (Appendix~\ref{app:injection}); no full-rank reference is trainable at 70B under our single-device lens placement (Section~\ref{sec:subset_kl}). On the GPT-2 control set the full-rank reference and both low-rank variants select the same layer. All 8B lenses share the identical 250-step annealed schedule.}
\label{tab:lenspred}
\small
\begin{tabular}{lccccccc}
\toprule
 & & & \multicolumn{5}{c}{gain captured at the chosen layer} \\
\cmidrule(l){4-8}
Model & $\tau$ & causal best $\ell^{*}$ & Top-$k$+IS & Top-$k$ & full-rank ref. & logit & last layer \\
\midrule
\multirow{2}{*}{LLaMA-3-8B}
  & 2 & 19 & $\mathbf{100\%}$ & $82.3\%$ & $77.1\%$ & $82.3\%$ & $82.3\%$ \\
  & 4 & 17 & $\mathbf{60.8\%}$ & $34.2\%$ & $58.7\%$ & $34.2\%$ & $34.2\%$ \\
\midrule
\multirow{2}{*}{LLaMA-3-70B}
  & 2 & 78 & $\mathbf{100\%}$ & \textemdash{} & \textemdash{} & $94.8\%$ & $94.8\%$ \\
  & 4 & 77 & $\mathbf{82.2\%}$ & \textemdash{} & \textemdash{} & $66.3\%$ & $66.3\%$ \\
\bottomrule
\end{tabular}
\end{table*}

\begin{figure}[t]
    \centering
    \includegraphics[width=\linewidth]{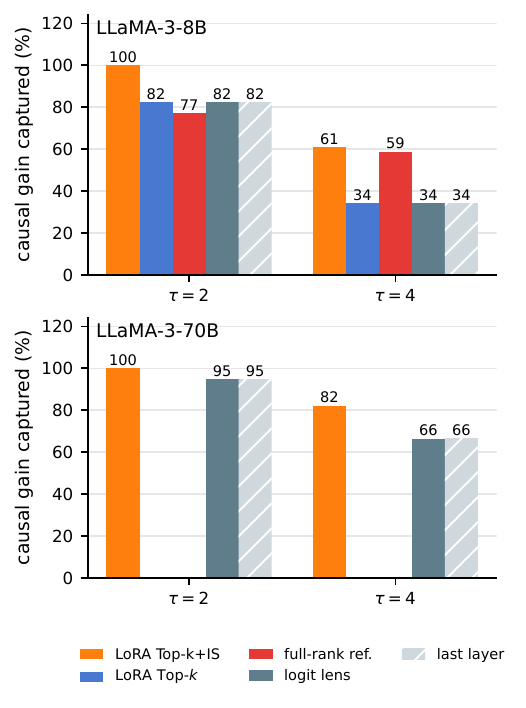}
\caption{Fraction of achievable causal gain captured by each method's selected layer (2WMH, over-injection; LLaMA-3-8B top, 70B bottom), the graphical counterpart of Table~\ref{tab:lenspred}. The logit lens selects the final layer at both scales, so its bars coincide with the depth heuristic's. Layer-by-layer profiles appear in Figure~\ref{fig:lenspred_profiles}.}
    \label{fig:lenspred}
\end{figure}

\begin{figure}[t]
    \centering
    \includegraphics[width=\linewidth]{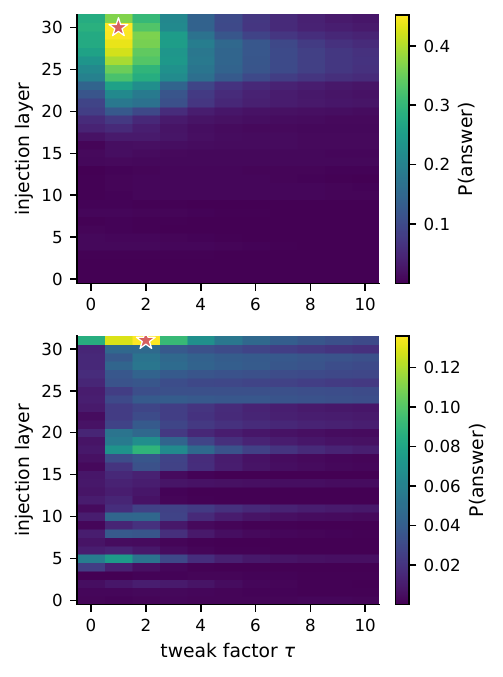}
\caption{Lens-read $P(\text{answer})$ at LLaMA-3-8B as a function of injection layer ($y$) and tweak factor $\tau$ ($x$), on the control set (top) and 2WMH (bottom); the star marks the peak (2WMH: layer $17$ at $\tau{=}4$). This is the lens's view of the injected state, not the model's output; causal effects appear in Table~\ref{tab:causal_inject} and Figure~\ref{fig:causal_lift}.}
    \label{fig:injection}
\end{figure}

\begin{figure}[t]
    \centering
    \includegraphics[width=\linewidth]{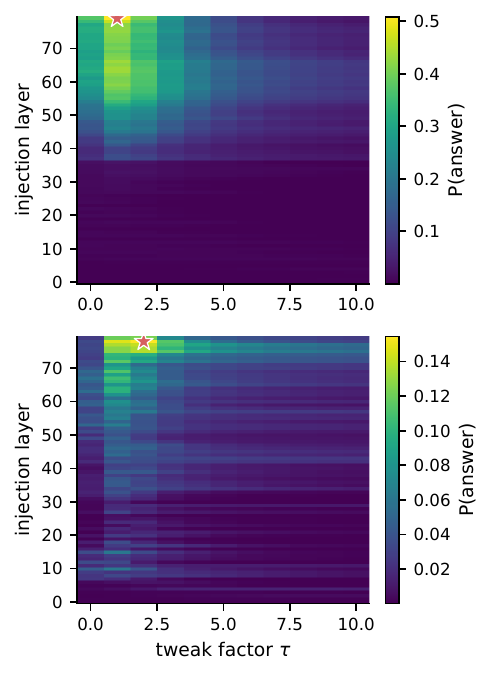}
\caption{As Figure~\ref{fig:injection}, for LLaMA-3-70B (control set top, 2WMH bottom; 2WMH peak at layer $78$, $\tau{=}2$).}
    \label{fig:injection_70b}
\end{figure}

\begin{figure}[t]
    \centering
    \includegraphics[width=\linewidth]{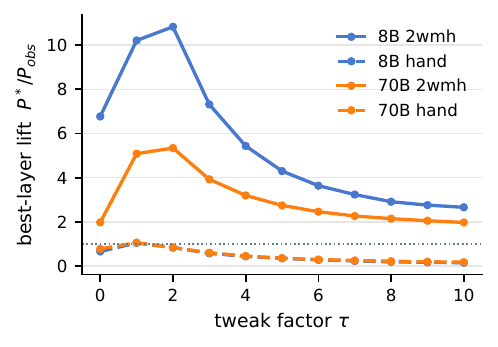}
\caption{Lens-read answer recovery as a function of the tweak factor $\tau$ at each model's best injection layer.}
    \label{fig:injection_curves}
\end{figure}

\begin{figure}[t]
    \centering
    \includegraphics[width=\linewidth]{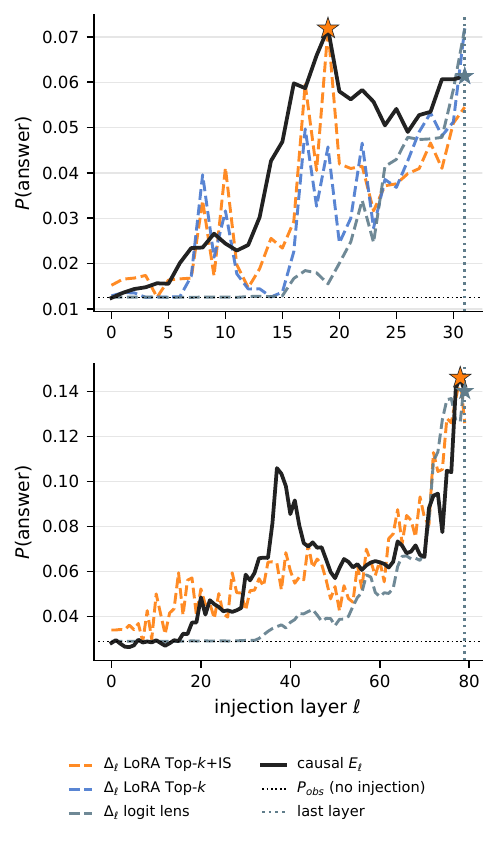}
\caption{Layer-by-layer profiles behind Table~\ref{tab:lenspred} (2WMH, $\tau{=}2$; LLaMA-3-8B top, 70B bottom): the causal profile $E_\ell$ (solid) against each lens's deficiency profile $\Delta_\ell$ (dashed, rescaled to the same axis). Small stars mark each lens's selected layer and the large star the causal optimum; the selected layers and captured gains are quantified in Table~\ref{tab:lenspred}. The trained lenses peak at or beside the interior causal optimum; the logit lens rises monotonically to the last layer.}
    \label{fig:lenspred_profiles}
\end{figure}

\begin{table*}[t]
\centering
\caption{Injection collateral damage on 2WMH under over-injection. \emph{KL} is measured between the injected and clean next-token distributions (nats); \emph{top-1 kept} is the fraction of examples whose argmax token is preserved; \emph{$\Delta P$/nat} is answer-probability gain per nat of distortion. The 8B Top-$k$ lens selects the final layer, so its row coincides with \emph{last}.}
\label{tab:inject_damage}
\small
\begin{tabular}{clcccccc}
\toprule
Model & $\tau$ & site & gain & KL & top-1 kept & $\Delta P$/nat \\
\midrule
\multirow{6}{*}{LLaMA-3-8B}
  & \multirow{3}{*}{2} & Top-k+IS ($\ell{=}19$) & $\mathbf{100\%}$ & $\mathbf{2.48}$ & $\mathbf{0.35}$ & $\mathbf{0.024}$ \\
  &                    & full-rank ref.\ ($\ell{=}22$) & $77.1\%$ & $2.63$ & $0.29$ & $0.017$ \\
  &                    & Top-$k$ $=$ last ($\ell{=}31$) & $82.3\%$ & $3.82$ & $0.23$ & $0.013$ \\
\cmidrule(l){2-7}
  & \multirow{3}{*}{4} & Top-k+IS ($\ell{=}19$) & $\mathbf{60.8\%}$ & $\mathbf{4.41}$ & $\mathbf{0.17}$ & $\mathbf{0.010}$ \\
  &                    & full-rank ref.\ ($\ell{=}22$) & $58.7\%$ & $5.09$ & $0.12$ & $0.008$ \\
  &                    & Top-$k$ $=$ last ($\ell{=}31$) & $34.2\%$ & $7.41$ & $0.05$ & $0.003$ \\
\midrule
\multirow{4}{*}{LLaMA-3-70B}
  & \multirow{2}{*}{2} & Top-k+IS ($\ell{=}78$) & $\mathbf{100\%}$ & $\mathbf{4.02}$ & $\mathbf{0.31}$ & $\mathbf{0.029}$ \\
  &                    & last ($\ell{=}79$)     & $94.8\%$ & $4.32$ & $0.29$ & $0.026$ \\
\cmidrule(l){2-7}
  & \multirow{2}{*}{4} & Top-k+IS ($\ell{=}78$) & $\mathbf{82.2\%}$ & $\mathbf{6.54}$ & $\mathbf{0.15}$ & $\mathbf{0.008}$ \\
  &                    & last ($\ell{=}79$)     & $66.3\%$ & $6.89$ & $0.05$ & $0.006$ \\
\bottomrule
\end{tabular}
\end{table*}

\paragraph{Injection-selection seed stability.} Retraining all three 8B lenses under two additional seeds (identical schedule) shows the recommended lens's selection is the seed-robust one: the Top-$k$+IS pick stays interior and near-optimal ($\ell \in \{18, 19\}$, capturing $90$--$100\%$ at $\tau{=}2$ and $61$--$75\%$ at $\tau{=}4$), while the Top-$k$ pick lands on the final layer on two seeds and an early layer ($\ell{=}10$, $20\%$) on the third, and the full-rank reference's pick moves across $\ell \in \{22, 24, 31\}$ ($64$--$82\%$ at $\tau{=}2$), reaching the final layer on one seed.

\begin{figure}[t]
    \centering
    \includegraphics[width=\linewidth]{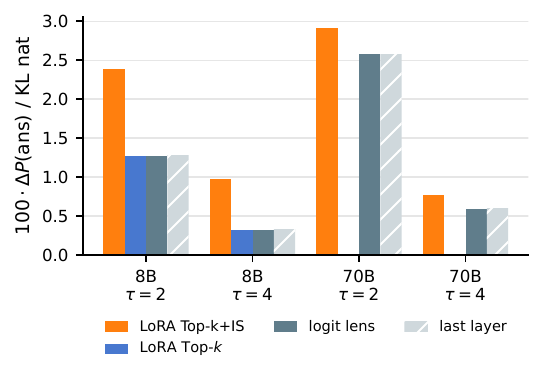}
\caption{Collateral damage at the selected layer (2WMH): answer-probability gain per nat of distortion of the next-token distribution. The logit lens's selection coincides with the final layer and is omitted; exact values in Appendix Table~\ref{tab:inject_damage}.}
    \label{fig:inject_damage}
\end{figure}

\begin{figure}[t]
    \centering
    \includegraphics[width=\linewidth]{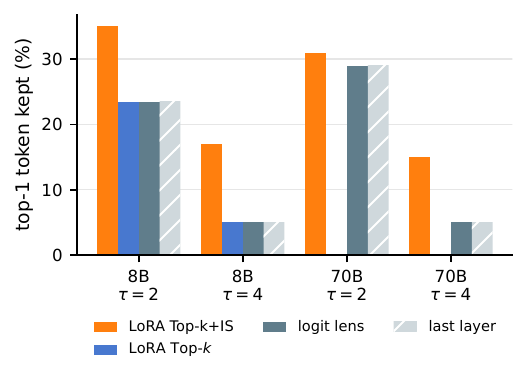}
\caption{Fraction of examples whose top-1 token survives injection at the same selected layers as Figure~\ref{fig:inject_damage}.}
    \label{fig:inject_damage_top1}
\end{figure}

\section{DART and ToxIn Details}
\label{app:dart}

DART scores each attention head over 200 toxic prompts drawn from the Wiki Toxic corpus~\citep{wiki_toxic}: per-head attention outputs are captured at the output projection (the \texttt{o\_proj}/\texttt{c\_proj} input split into query heads, valid under grouped-query attention since the projection input carries all query heads), each head's last-token contribution is decoded through the lens at that layer's attention-output site, and the toxic count is the number of matches against a precomputed toxic-vocabulary set among the top-50 decoded tokens. The 8B full-rank reference carries only per-layer residual translators, so head outputs are decoded through the block-input translator of their layer rather than a dedicated attention-output translator, a half-block site mismatch that makes its agreement numbers conservative. ToxIn performs zero or soft subtraction of the unembedding-derived toxic direction at the DART-flagged heads; toxicity is scored with Toxic-BERT over greedy generations and fluency with WikiText-2 perplexity; 70B uses a 40-prompt reduced evaluation. The whole-model audit (Figure~\ref{fig:allsites}) selects the top lens-flagged sites of each component type and subtracts the lens-mapped direction there, sweeping subtraction strength under a perplexity budget.

\paragraph{Selector comparisons and the 70B audit.} Per-head toxicity maps and top-head tables appear in Figure~\ref{fig:dart} and Table~\ref{tab:dart}. The trained variants agree on the 8B head ranking, and at GPT-2 they reproduce the full-rank reference's audit; the logit lens's agreement with the trained lenses decays with model size (Spearman $0.73$ at GPT-2 to $0.02$ at 70B; Table~\ref{tab:cross_lens}). The 8B full-rank reference identifies the same strongest head as every other variant (\texttt{L23.H24}) and shares 3 of 5 top heads with the recommended lens. At GPT-2, with the ablation direction fixed to the unembedding and reductions interpolated to the $\text{PPL} = 1.10\times$ operating point on each selector's scale sweep, our lens's heads give $25.0\%$, the full-rank reference's $23.2\%$, and unembedding-only selection $18.8\%$; the trained selections are separated by less than run-to-run noise, so we read this as parity between our lens and the reference, both ahead of the lens-free selector. At 8B, under soft subtraction every trained selector reduces toxicity, whereas random-head intervention increases it; under zero-ablation the recommended \klapprox\ selection produces the clearest reduction (Table~\ref{tab:selector}; single-run evaluations, so differences of a few points are within run-to-run noise). Soft subtraction trades toxicity against perplexity controllably at 8B (Figure~\ref{fig:toxin_soft}; full sweeps in Table~\ref{tab:ablation}). At 70B, toxic signal again localizes to late-layer heads but is spread far more thinly (the top five carry only ${\sim}10\%$), and ablating the flagged heads is no more effective than ablating random ones (Figure~\ref{fig:toxin}): targeted head ablation is effective only when the localized signal is sufficiently concentrated, and that concentration is absent at 70B. In the whole-model audit (Figure~\ref{fig:allsites}), on GPT-2 subtracting at \texttt{attn\_in} and \texttt{resid\_post} removes $2.2\times$ and $1.6\times$ as much toxicity as the original per-head recipe; at 8B, \texttt{mlp\_out} ($34\%$) and \texttt{resid\_post} ($27\%$) reach $3.1\times$ and $2.5\times$, while \texttt{attn\_in}, the most effective target on GPT-2, has almost no effect: a selection made on one model does not transfer to another. The full 8B audit (192 hookpoints scored, 24 ablation sweeps) runs in under ten minutes on one node.

\begin{figure}[t]
    \centering
    \includegraphics[width=\linewidth]{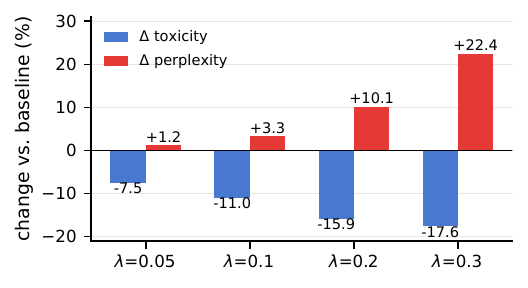}
\caption{ToxIn soft subtraction at 8B trades toxicity against perplexity as the subtraction strength $\lambda$ grows.}
    \label{fig:toxin_soft}
\end{figure}

\begin{figure}[t]
    \centering
    \includegraphics[width=\linewidth]{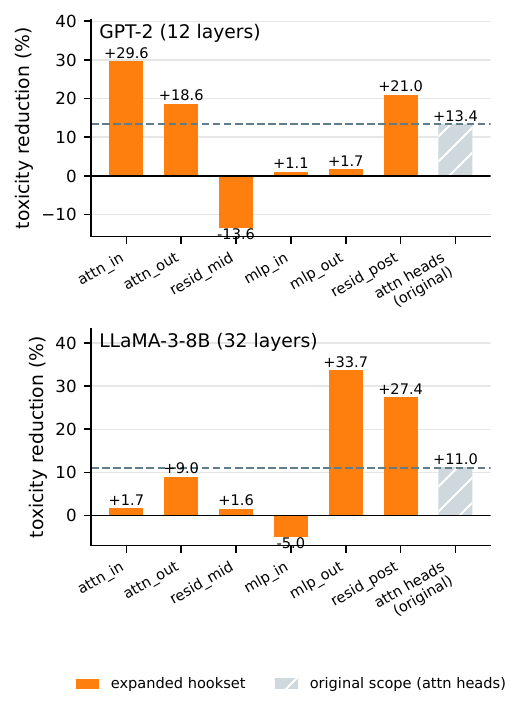}
\caption{The whole-model ablation audit (GPT-2 top, LLaMA-3-8B bottom): soft subtraction of the lens-mapped toxic direction at the top-flagged hookpoints of each component type, best operating point under a PPL~$\le 1.10\times$ budget, compared with the original attention-heads-only recipe (hatched bar, dashed line). The most effective targets lie outside attention at both scales and differ between them.}
    \label{fig:allsites}
\end{figure}

\begin{figure}[t]
    \centering
    \includegraphics[width=\linewidth]{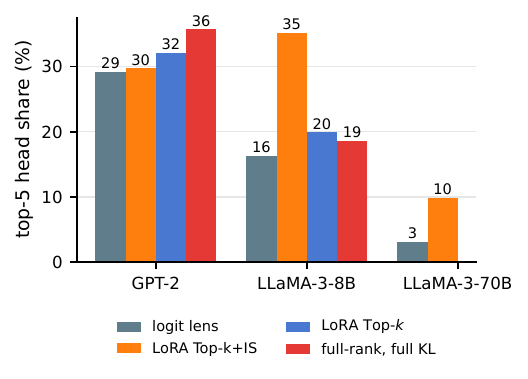}
\caption{Share of the total toxic signal carried by the top-5 heads, by model and lens. Localization is comparable at GPT-2 and 8B and far more diffuse at 70B; the logit lens's concentration declines with scale relative to the trained lenses. Per-head maps appear in Figures~\ref{fig:dart_heatmaps}--\ref{fig:dart_layerprofile}.}
    \label{fig:dart}
\end{figure}

\begin{figure}[t]
    \centering
    \includegraphics[width=\linewidth]{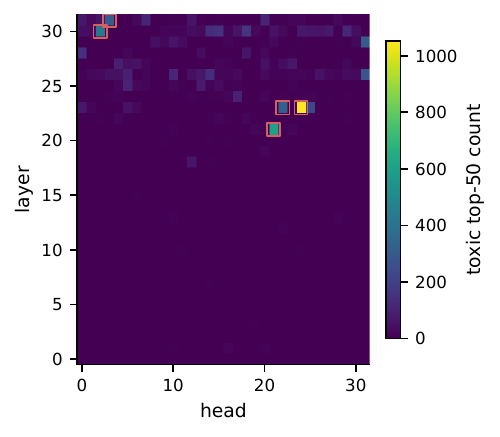}
\caption{DART per-head toxic-token counts at LLaMA-3-8B (LoRA \klapprox\ lens): the signal concentrates in a few late-layer heads. Boxes mark the top-5 heads; the strongest (\texttt{L23.H24}) and the top-5 share are quantified in Table~\ref{tab:dart}.}
    \label{fig:dart_heatmaps}
\end{figure}

\begin{figure}[t]
    \centering
    \includegraphics[width=\linewidth]{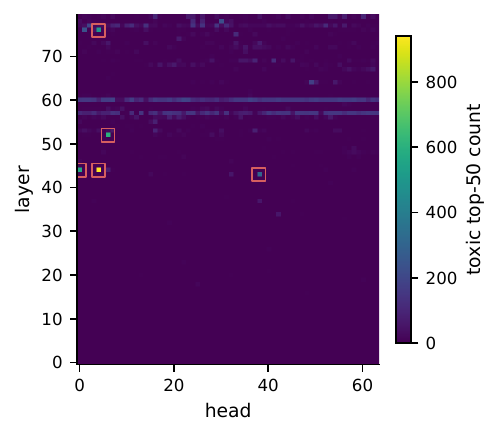}
\caption{As Figure~\ref{fig:dart_heatmaps}, for LLaMA-3-70B: the same late-layer localization holds but is far more diffuse (Table~\ref{tab:dart}).}
    \label{fig:dart_heatmaps_70b}
\end{figure}

\begin{figure}[t]
    \centering
    \includegraphics[width=\linewidth]{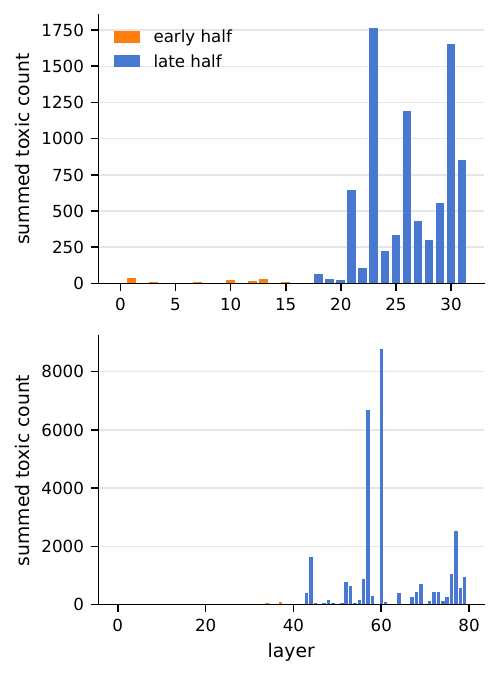}
\caption{DART toxic signal aggregated by layer (LLaMA-3-8B top, 70B bottom), showing the late-layer skew quantified in Table~\ref{tab:dart}.}
    \label{fig:dart_layerprofile}
\end{figure}

\begin{table*}[t]
\centering
\caption{DART localization concentration, all lens variants under one harness (200 toxic prompts, top-50 decoded tokens per head). \emph{Total} is the number of toxic tokens flagged over all heads; \emph{top head} and \emph{top-5} give the share of that total; \emph{late/early} is the ratio of summed counts in the second vs.\ first half of layers. The 8B full-rank reference decodes heads through per-layer residual translators (see above).}
\label{tab:dart}
\small
\begin{tabular}{llrccc}
\toprule
Model & Lens & Total & Top head & Top-5 & Late/early \\
\midrule
\multirow{4}{*}{GPT-2}
  & logit              & 38{,}753 & \texttt{L08.H02} ($6.8\%$)  & $29\%$ & $1.7\times$ \\
  & LoRA Top-k+IS      & 23{,}858 & \texttt{L08.H02} ($7.3\%$)  & $30\%$ & $1.8\times$ \\
  & LoRA Top-$k$       & 16{,}582 & \texttt{L11.H03} ($10.5\%$) & $32\%$ & $1.8\times$ \\
  & full-rank, full KL & 13{,}586 & \texttt{L11.H03} ($11.5\%$) & $36\%$ & $1.7\times$ \\
\midrule
\multirow{4}{*}{LLaMA-3-8B}
  & logit              & 94{,}077 & \texttt{L23.H24} ($6.4\%$)  & $16\%$ & $1.9\times$ \\
  & LoRA Top-k+IS      & 9{,}395  & \texttt{L23.H24} ($12.4\%$) & $35\%$ & $167\times$ \\
  & LoRA Top-$k$       & 24{,}641 & \texttt{L23.H24} ($7.6\%$)  & $20\%$ & $2.5\times$ \\
  & full-rank (resid.) & 62{,}121 & \texttt{L23.H24} ($7.0\%$)  & $19\%$ & $2.1\times$ \\
\midrule
\multirow{2}{*}{LLaMA-3-70B}
  & logit         & 407{,}752 & \texttt{L44.H04} ($0.8\%$) & $\phantom{0}3\%$ & $1.1\times$ \\
  & LoRA Top-k+IS & 29{,}699  & \texttt{L44.H04} ($3.2\%$) & $10\%$ & $72\times$ \\
\bottomrule
\end{tabular}
\end{table*}

\begin{table}[t]
\centering
\caption{ToxIn ablation. Toxicity is the mean Toxic-BERT~\citep{Detoxify} score over greedy generations from toxic prompts; PPL on WikiText-2. 70B uses the reduced 40-prompt evaluation.}
\label{tab:ablation}
\small
\begin{tabular}{lcccc}
\toprule
 & \multicolumn{2}{c}{LLaMA-3-8B} & \multicolumn{2}{c}{LLaMA-3-70B} \\
\cmidrule(lr){2-3}\cmidrule(l){4-5}
Intervention & $\Delta$tox & PPL & $\Delta$tox & PPL \\
\midrule
Baseline               & ---      & 9.66 & ---     & 3.56 \\
Zero-ablate DART top-15 & $-16.7\%$ & 9.88 & $+4.1\%$ & 3.58 \\
Zero-ablate 15 random   & $+4.2\%$ & 9.74 & $-0.5\%$ & 3.57 \\
Soft ToxIn $\lambda{=}0.1$ & $-11.0\%$ & 9.98 & $-1.7\%$ & 3.58 \\
Soft ToxIn $\lambda{=}0.3$ & $-17.6\%$ & 11.8 & $+3.2\%$ & 3.71 \\
\bottomrule
\end{tabular}
\end{table}

\paragraph{Estimator localization and seed stability.} Trained under the identical 250-step annealed schedule, the two estimators concentrate the signal differently: the Top-$k$+IS lens localizes roughly twice as sharply as Top-$k$ (top-5 share $35\%$ vs.\ $20\%$; across three training seeds, $31$--$35\%$ vs.\ $11$--$20\%$). The strongest-head identification is fully seed-robust: retraining all three lens variants under two additional seeds, every one of the nine lens$\times$seed audits ranks \texttt{L23.H24} first, with 4--5 of 5 top heads shared across seeds.

\begin{figure}[t]
    \centering
    \includegraphics[width=\linewidth]{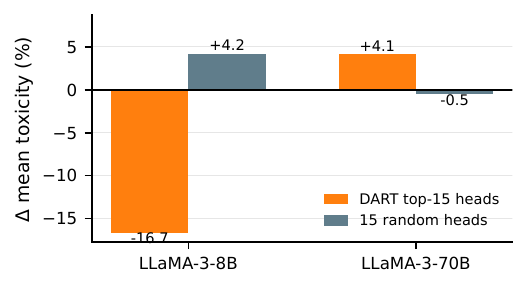}
\caption{ToxIn zero-ablation: removing the DART-flagged heads reduces toxicity at 8B while removing random heads does not; at 70B neither does.}
    \label{fig:toxin}
\end{figure}

\begin{table*}[t]
\centering
\caption{Head-selector comparison at 8B: ToxIn with the same unembedding-derived direction (100 toxic prompts; Toxic-BERT; PPL on WikiText-2, baseline 9.66). Zero-ablation acts on the top-15 flagged heads themselves; soft subtraction acts on the attention output of the layers containing them (7--11 layers per selector). All trained lenses share the identical 250-step annealed schedule; the random row averages three seed-0 draws, which partially overlap.}
\label{tab:selector}
\small
\begin{tabular}{lcccc}
\toprule
 & \multicolumn{2}{c}{zero-ablate top-15} & \multicolumn{2}{c}{soft $\lambda{=}0.3$} \\
\cmidrule(lr){2-3}\cmidrule(l){4-5}
Head selector & $\Delta$tox & PPL & $\Delta$tox & PPL \\
\midrule
LoRA Top-$k$+IS             & $\mathbf{-16.7\%}$ & 9.9  & $-17.6\%$ & 11.8 \\
LoRA Top-$k$              & $+0.4\%$  & 9.8  & $\mathbf{-27.8\%}$ & 12.7 \\
full-rank tuned (ref.)    & $-0.8\%$  & 9.8  & $-23.3\%$ & 11.9 \\
logit lens                & $-7.3\%$  & 10.1 & $-17.7\%$ & 11.5 \\
15 random heads           & $+5.2\%$  & 9.8  & \textemdash{}       & \textemdash{}  \\
\bottomrule
\end{tabular}
\end{table*}

\section{Fine-Grained Fidelity Statistics}
\label{app:finegrained}

Beyond task outcomes, we compare the lenses on the trajectory statistics themselves.

\textbf{Rank agreement of head audits.} Table~\ref{tab:cross_lens} quantifies cross-lens agreement of the DART head rankings. Trained lenses agree at every scale, and all variants (including the full-rank references at GPT-2 and 8B) identify the same strongest heads; rank correlations over \emph{all} heads are dominated by the inert majority and should be read jointly with the top-head overlaps. The logit lens's rank agreement with trained lenses decays with scale (Spearman $0.73 \to 0.02$) even as it continues to identify the very strongest heads.

\textbf{Prediction depth.} From the detection captures we also compute each lens's prediction depth: the trajectory point after which its top-1 prediction stops changing~\citep{belrose2023elicitinglatentpredictionstransformers}. Against the full-rank reference, the LoRA lenses agree to within one hidden state on 69--77\% of GPT-2 examples (mean absolute difference ${\approx}1$ layer), while the logit lens manages 45\% with no rank correlation ($\rho=-0.08$). At 8B the ordering is preserved (LoRA 31--38\% within one state, $\rho\approx0.53$; logit 25\%, $\rho=0.24$), but the full-KL reference resolves predictions systematically earlier than the Subset-KL lenses, by four to five hidden states on average. Prediction depth is the fine-grained statistic where the expensive reference retains a visible edge; the task-level results of the main text are unaffected by it.

\begin{figure*}[t]
    \centering
    \includegraphics[width=\linewidth]{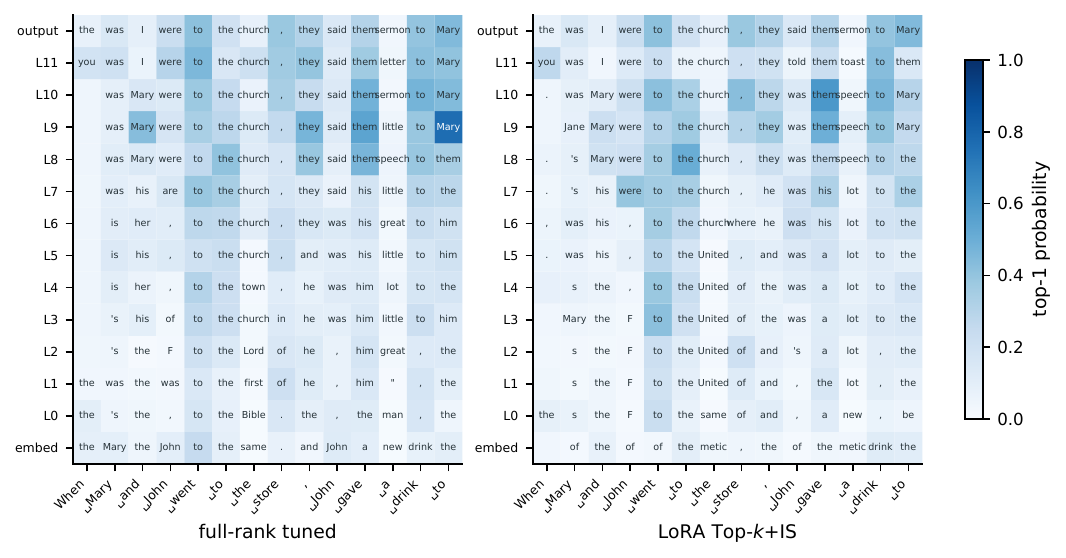}
\caption{The classic prediction-trajectory grid (top-1 token of the lens readout at every layer and position; shading = probability) for the full-rank tuned lens and LoRA \klapprox\ on the same GPT-2 prompt. Both lenses read the same computation: the indirect object emerges around L9--L10 and stabilizes to the model's final prediction.}
    \label{fig:traj_grid}
\end{figure*}

\textbf{Causal basis extraction.} Finally we rerun the original paper's causal-fidelity experiment: for each lens and layer, extract the $k{=}16$ orthonormal directions whose mean-ablation most changes the lens output, then ablate each direction in the \emph{model} and measure the KL against the clean output (Appendix~\ref{app:inj_detect}). Every lens's directions are causally real: ablating them moves the model two orders of magnitude more than random directions (mean top-8 KL 0.39--0.48 at GPT-2, 0.69--0.81 at 8B, vs.\ ${\approx}0.002$--$0.007$ random), and our lenses match the reference on this transfer strength. On the finer statistic (Spearman correlation between the lens's claimed influence ordering and the realized model KL), the reference leads at GPT-2 (0.71 vs.\ 0.57--0.61 LoRA, 0.40 logit), while at 8B the statistic stops discriminating between lenses entirely (0.52--0.65 for all, logit included). The qualitative counterpart is Figure~\ref{fig:traj_grid}: the full-rank and \klapprox\ trajectory grids on the same prompt are near-identical.

\begin{table*}[t]
\centering
\caption{Cross-lens agreement of DART head rankings over all attention heads. \emph{Top-5} counts shared heads among each lens's five strongest. Trained lenses agree at every scale; the logit lens's rank agreement with trained lenses decays with scale, although it still identifies the strongest heads. The 8B full-rank row carries the residual-translator caveat of Table~\ref{tab:dart}.}
\label{tab:cross_lens}
\small
\begin{tabular}{llcccccc}
\toprule
Model & Comparison & Pearson & Spearman & Kendall & Top-25\% & Top-50\% & Top-5 \\
\midrule
\multirow{6}{*}{GPT-2}
  & Top-k+IS vs.\ Top-$k$    & 0.869 & 0.610 & 0.464 & 0.64 & 0.68 & 3/5 \\
  & Top-k+IS vs.\ full-rank  & 0.776 & 0.570 & 0.420 & 0.67 & 0.72 & 2/5 \\
  & Top-$k$ vs.\ full-rank   & 0.884 & 0.529 & 0.392 & 0.61 & 0.71 & 4/5 \\
  & logit vs.\ Top-k+IS      & 0.950 & 0.731 & 0.554 & 0.78 & 0.76 & 4/5 \\
  & logit vs.\ Top-$k$       & 0.779 & 0.581 & 0.428 & 0.64 & 0.69 & 3/5 \\
  & logit vs.\ full-rank     & 0.677 & 0.475 & 0.337 & 0.67 & 0.65 & 2/5 \\
\midrule
\multirow{4}{*}{LLaMA-3-8B}
  & Top-k+IS vs.\ Top-$k$    & 0.946 & 0.414 & 0.340 & 0.51 & 0.66 & 3/5 \\
  & Top-k+IS vs.\ full-rank  & 0.838 & 0.258 & 0.205 & 0.39 & 0.60 & 3/5 \\
  & logit vs.\ Top-k+IS      & 0.836 & 0.121 & 0.091 & 0.36 & 0.54 & 3/5 \\
  & logit vs.\ Top-$k$       & 0.888 & 0.265 & 0.188 & 0.41 & 0.57 & 4/5 \\
\midrule
LLaMA-3-70B & logit vs.\ Top-k+IS & 0.533 & 0.023 & 0.018 & 0.27 & 0.54 & 5/5 \\
\bottomrule
\end{tabular}
\end{table*}

\section{Case-Study Scope and Caveats}
\label{app:caveats}

All results here are replications intended to validate lens fidelity at scale. The LLaMA runs use reduced evaluations (Appendix~\ref{app:injection}); the 2WMH templates and their programmatically constructed explicit prompts are imperfect references; and dictionary-based DART scoring favors lexically explicit toxicity. The injection results and the 8B ablations are causal interventions, while the 70B localization and the diffusion account of its null ablation remain observational. The 8B full-rank reference enters the injection and DART comparisons through its per-layer residual translators (a half-block hookpoint mismatch; Appendix~\ref{app:dart}), so its agreement numbers there are conservative, and the selector comparison of Table~\ref{tab:selector} is a single-run evaluation. Fine-grained statistics (injection-layer rankings, prediction depth, influence orderings) are the last lens properties to stabilize during training and should be read from converged, annealed lenses; the 8B seed study shows they are also the most seed-sensitive: coarse results (the strongest DART head, interior-vs-final layer selection for the recommended lens) replicate across all three training seeds, while the exact picked layer and top-5 shares move within the ranges reported above. The trained 8B lenses in the injection and toxicity analyses share a single 250-step cosine-annealed schedule, identical to the 70B run's; the detection captures of Table~\ref{tab:inj_detect} and the whole-model audit of Section~\ref{sec:audit} predate this schedule (earlier-schedule checkpoints), and the earlier bracketing checkpoints supported the same coarse conclusions.
Finally, the layer-selection advantage of trained lenses is specific to the over-injection regime, and the logit lens's failures are specific to scale: at GPT-2 it selects an effective injection layer and localizes adequately.